\documentclass{article} 
\usepackage{iclr2021_conference,times}

\usepackage{hyperref}

\usepackage{url}  
\usepackage{helvet} 
\usepackage{courier}  
\usepackage{graphicx} 
\urlstyle{rm} 
\usepackage{natbib}  
\usepackage{caption} 
\frenchspacing  
\setlength{\pdfpagewidth}{8.5in}  
\usepackage{subfigure}
\usepackage{amsthm}
\usepackage{color}
\usepackage{comment}
\usepackage{amsmath}
\usepackage{amsthm} 
\usepackage{amssymb}
\usepackage{bbm}
\usepackage{mathbbol} 
\iclrfinalcopy

\DeclareSymbolFontAlphabet{\mathbb}{AMSb}
\DeclareSymbolFontAlphabet{\mathbbl}{bbold}

\setlength{\pdfpageheight}{11in}  

\setcounter{secnumdepth}{1}

\newcommand{\sipfull}{\ensuremath{\pi^{\text{SIP}}_{\Pi^n}}}
\newcommand{\smpfull}{\ensuremath{\pi^{\text{SMP}}_{\Pi^n}}}

\newcommand{\vsipfull}{\ensuremath{v^{\text{SIP}}_{\Pi^n, \w}}}
\newcommand{\vsmpfull}{\ensuremath{v^{\text{SMP}}_{\Pi^n, \w}}}
\newcommand{\vgpifull}{\ensuremath{v^{\text{GPI}}_{\Pi^n, \w}}}
\newcommand{\vsmpfulla}{\ensuremath{\bar{v}^{\text{SMP}}_{\Pi^n}}}

\newcommand{\sip}{\ensuremath{\pi^{\text{SIP}}}}
\newcommand{\smp}{\ensuremath{\pi^{\text{SMP}}}}
\newcommand{\gpi}{\ensuremath{\pi^{\text{GPI}}}}

\newcommand{\vgpi}{\ensuremath{v^{\text{GPI}}}}

\newcommand{\qgpi}{\ensuremath{Q^{\text{GPI}}}}

\newcommand{\vwhpi}{v_{\Pi^n,\w}^{\text{SMP}}}
\newcommand{\vhpit}{v_{\Pi^t}^{\text{SMP}}}
\newcommand{\vhpitt}{v_{\Pi^{t+1}}^{\text{SMP}}}

\newcommand{\wa}{\ensuremath{\bar{\w}}}
\newcommand{\wapi}{\ensuremath{\wa^{\text{SMP}}_{\Pi^n}}}
\newcommand{\wapit}{\ensuremath{\wa^{\text{SMP}}_{\Pi^t}}}
\newcommand{\wapitt}{\ensuremath{\wa^{\text{SMP}}_{\Pi^{t+1}}}}

\newcommand{\norm}[1]{\|#1\|}

\newcommand{\E}{\ensuremath{\mathbb{E}}}
\newcommand{\mat}[1]{\ensuremath{\boldsymbol{\mathrm{#1}}}}
\newcommand{\vphi}{\mat{\phi}}
\newcommand{\vpsi}{\mat{\psi}}
\newcommand{\R}{\ensuremath{\mathbb{R}}}
\renewcommand{\S}{\ensuremath{\mathcal{S}}}
\newcommand{\A}{\ensuremath{\mathcal{A}}}
\newcommand{\w}{\mat{w}}

\newcommand{\mdpset}{\ensuremath{\mathcal{M}_{\phi}}}
\newcommand{\W}{\ensuremath{\mathcal{W}}}

\newcommand{\ie}{{\sl i.e.}}

\usepackage[utf8]{inputenc} 
\usepackage[T1]{fontenc}    
\usepackage{hyperref}       
\usepackage{url}            
\usepackage{booktabs}       
\usepackage{amsfonts}       
\usepackage{nicefrac}       
\usepackage{microtype}      
\usepackage{wrapfig}
\usepackage{algorithm}
\usepackage{algorithmic}

\author{%
  Tom Zahavy\thanks{\texttt{tomzahavy@google.com}}, Andre Barreto, Daniel J Mankowitz, Shaobo Hou, Brendan O'Donoghue,\\ \textbf{Iurii Kemaev and Satinder Singh}\\
  DeepMind \\
}


\title{Discovering a set of policies for the worst case reward}

\usepackage{natbib}
\usepackage{graphicx}
\usepackage{microtype}
\usepackage{amsmath}
\usepackage{amssymb}

\usepackage{booktabs} 
\usepackage[capitalize]{cleveref}
\usepackage{thmtools}
\usepackage{thm-restate}

\newtheorem{definition}{Definition}

\setcounter{secnumdepth}{2}
\begin{document}

\newcommand{\andre}[1]{{\color{blue} \textbf{AB:} #1}}

\maketitle

\begin{abstract}
We study the problem of how to construct a set of policies that can be composed together to solve a collection of reinforcement learning tasks. Each task is a different reward function defined as a linear combination of  known features. We consider a specific class of policy compositions which we call set improving policies (SIPs): given a set of policies and a set of tasks, a SIP is any composition of the former whose performance is at least as good as that of its constituents across all the tasks. We focus on the most conservative instantiation of SIPs, set-max policies (SMPs), so our analysis extends to any SIP. This includes known policy-composition operators like generalized policy improvement. Our main contribution is a policy iteration algorithm that builds a set of policies in order to maximize the worst-case performance of the resulting SMP on the set of tasks. The algorithm works by successively adding new policies to the set. We show that the worst-case performance of the resulting SMP strictly improves at each iteration, and the algorithm only stops when there does not exist a policy that leads to improved performance. We empirically evaluate our algorithm on a grid world and also on a set of domains from the DeepMind control suite. We confirm our theoretical results regarding the monotonically improving performance of our algorithm. Interestingly, we also show empirically that the sets of policies computed by the algorithm are diverse, leading to different trajectories in the grid world and very distinct locomotion skills in the control suite.
\end{abstract}

\section{Introduction}

Reinforcement learning (RL) is concerned with building agents that can learn to act so as to maximize reward through trial-and-error interaction with the environment. There are several reasons why it can be useful for an agent to learn about multiple ways of behaving, \ie, learn about multiple policies. The agent may want to achieve multiple tasks (or subgoals) in a lifelong learning setting and may learn a separate policy for each task, reusing them as needed when tasks reoccur. The agent may have a hierarchical architecture in which many policies are learned at a lower level while an upper level policy learns to combine them in useful ways, such as to accelerate learning on a single task or to transfer efficiently to a new task. Learning about multiple policies in the form of options \citep{sutton1999between} can be a good way to achieve temporal abstraction; again this can be used to quickly plan good policies for new tasks. In this paper we abstract away from these specific scenarios and ask the following question: what set of policies should the agent pre-learn in order to guarantee good performance under the worst-case reward? A satisfactory answer to this question could be useful in all the scenarios discussed above and potentially many others.


 
There are two components to the question above: $(i)$ what policies should be in the set, and $(ii)$ how to compose a policy to be used on a new task from the policies in the set.
To answer $(ii)$, we propose the concept of a \emph{set improving policy} (SIP). Given any set of $n$ policies, a SIP is any composition of these policies whose performance is at least as good as, and generally better than, that of all of the constituent policies in the set. We present two policy composition (or improvement) operators that lead to a SIP. The first is called  \emph{set-max policy} (SMP). Given a distribution over states, a SMP chooses from $n$ policies the one that leads to the highest expected value.  The second SIP operator is \emph{generalized policy improvement} \citep[GPI]{barreto2017successor}. Given a set of $n$ policies and their associated action-value functions, GPI is a natural extension of regular policy improvement in which the agent acts greedily in each state with respect to the maximum over the set of action-values functions. Although SMP provides weaker guarantees than GPI (we will show this below), it is more amenable to analysis and thus we will use it exclusively for our theoretical results. However, since SMP's performance serve as a lower bound to GPI’s, the results we derive for the former also apply to the latter. In our illustrative experiments we will show this result empirically.

Now that we have fixed the answer to $(ii)$, \ie, how to compose pre-learned policies for a new reward function, we can leverage it to address $(i)$: what criterion to use to pre-learn the policies. Here, one can appeal to heuristics such as the ones advocating that the set of pre-learned policies should be as diverse as possible~\citep{eysenbach2018diversity,gregor2016variational,grimm2019disentangled,hansen2019fast}. In this paper we will use the formal criterion of \emph{robustness}, \ie, we will seek a set of policies that do as well as possible in the worst-case scenario. Thus, the problem of interest to this paper is as follows: how to define and discover a set of $n$ policies that maximize the worst possible performance of the resulting SMP across all possible tasks? Interestingly, as we will discuss, the solution to this robustness problem naturally leads to a diverse set of policies.

To solve the problem posed above we make two assumptions: (A1) that tasks differ only in their reward functions, and (A2) that reward functions are linear combinations of known features. These two assumptions allow us to leverage the concept of \emph{successor features} (SFs) and work in apprenticeship learning. As our main contribution in this paper, we present an algorithm that iteratively builds a set of policies such that SMP’s performance with respect to the worst case reward provably improves in each iteration, stopping when no such greedy improvement is possible. We also provide a closed-form expression to compute the worst-case performance of our algorithm at each iteration.  This means that, given tasks satisfying Assumptions A1 and A2, we are able to provably construct a SIP that can quickly adapt to any task with guaranteed worst-case performance.

\noindent{\bf Related Work.} The proposed approach has interesting connections with hierarchical RL (HRL)~\citep{sutton99between,dietterich2000hierarchical}. We can think of SMP (and GPI) as a higher-level policy-selection mechanism that is fixed {\sl a priori}. Under this interpretation, the problem we are solving can be seen as the definition and discovery of lower-level policies that will lead to a robust hierarchical agent. 

There are interesting parallels between robustness and diversity. For example, diverse stock portfolios have less risk. In robust least squares \citep{el1997robust, xu2009robust}, the goal is to find a solution that will perform well with respect to (w.r.t) data perturbations. This leads to a min-max formulation, and there are known equivalences between solving a robust (min-max) problem and the diversity of the solution (via regularization)~\citep{xu2012robustness}. Our work is also related to robust Markov decision processes (MDPs) \citep{nilim2005robust}, but our focus is on a different aspect of the problem. While in robust MDPs the uncertainty is w.r.t the dynamics of the environment, here we focus on uncertainty w.r.t the reward and assume that the dynamics are fixed. More importantly, we are interested in the hierarchical aspect of the problem -- how to discover and compose a set of policies. In contrast, solutions to robust MDPs are typically composed of a single policy.

In Apprenticeship Learning  \citep[AL;][]{abbeel2004apprenticeship} the goal is also to solve a min-max problem in which the agent is expected to perform as well as an expert w.r.t any reward. If we ignore the expert, AL algorithms can be used to find a single policy that performs well w.r.t any reward. The solution to this problem (when there is no expert) is the policy whose SFs have the smallest possible norm. When the SFs are in the simplex (as in tabular MDPs) the vector with the smallest $\ell_2$ norm puts equal probabilities on its coordinates, and is therefore "diverse" (making an equivalence between the robust min-max formulation and the diversity perspective). In that sense, our problem can be seen as a modified AL setup where: (a) no expert demonstrations are available (b) the agent is allowed to observe the reward at test time, and (c) the goal is to learn a set of constituent policies. 
\section{Preliminaries
\label{sec:preliminaries}}
We will model our problem of interest using a family of Markov Decision Processes (MDPs). An MDP is a tuple $M \triangleq (S,A,P, r, \gamma,D)$, where $S$ is the set of states, $A$ is the set of actions, $P=\{P^a \mid {a \in A}\}$ is the set of transition kernels, $\gamma \in [0,1]$ is the discount factor and $D$ is the initial state distribution. The function $r: S \times A \times S \mapsto \R$ defines the rewards, and thus the agent's objective; here we are interested in multiple reward functions, as we explain next.

Let $\vphi(s,a,s') \in [0,1]^d$ be an observable vector of features (our analysis only requires the features to be bounded; we use $[0,1]$ for ease of exposition). We are interested in the set of tasks induced by all possible linear combinations of the features $\vphi$. Specifically, for any $\w \in \R^d$, we can define a reward function $r_{\w}(s,a,s')=\w \cdot \vphi(s,a,s')$.  Given $\w$, the reward $r_{\w}$ is well defined and we will use the terms $\w$ and $r_{\w}$ interchangeably to refer to the RL task induced by it. 
Formally, we are interested in the following set of MDPs:
\begin{equation}
    \label{eq:mdpset}
 \mdpset \triangleq \{(S,A,P, r_{\w}, \gamma,D) \,|\, \w \in \W \}.  
\end{equation}
In general, $\W$ is any convex set, but we will focus on the $\ell_2$ $d$-dimensional ball denoted by $\W=\mathcal{B}_2$. This choice is not restricting, since the optimal policy in an MDP is invariant with respect to the scale of the rewards and the $\ell_2$ ball contains all the directions.

A policy in an MDP $M \in  \mdpset$, denoted by $\pi \in \Pi$, is a mapping $\pi: S \to \mathcal{P}(A)$, where $\mathcal{P}(A)$ is the space of probability distributions over $A$. For a policy $\pi$ we define the \emph{successor features} (SFs) as
{
\begin{align}
 \label{def:sf}
    \vpsi^\pi(s,a)  & \triangleq (1-\gamma) \cdot \mathbb{E} \Big[ \sum\nolimits _{t=0}^\infty \gamma^t \vphi(s_t, a_t, s_{t+1}) | P, \pi, s_t=s, a_t=a \Big].
\end{align}}
The multiplication by $1-\gamma$ together with the fact that the features $\phi$ are in $[0,1]$ assures that $\vpsi^\pi(s,a) \in [0,1]^d$ for all $(s,a) \in S \times A$.\footnote{While we focus on the most common, discounted RL criteria, all of our results will hold in the finite horizon and average reward criteria (see, for example, \citet{puterman2014markov}). Concretely, in these scenarios there exist normalizations for the SFs whose effect are equivalent to that of the multiplication by $1-\gamma $. In the finite-horizon case we can simply multiply the SFs by $1/H$. In the average reward case, there is no multiplication~\citep{zahavy2019average} and the value function is measured under the stationary distribution (instead of $D$).} We also define SFs that are conditioned on the initial state distribution $D$ and the policy $\pi$ as:
$
\vpsi^\pi \triangleq \E[\vpsi^{\pi}(s,a) | D, \pi] = \E_{s \sim D, a \sim \pi(s)} \vpsi^{\pi}(s,a).   
$
It should be clear that the SFs are conditioned on $D$ and $\pi$ whenever they are not written as a function of states and actions like in \cref{def:sf}. Note that, given a policy $\pi$, $\vpsi^\pi$ is simply a vector in $[0,1]^d$. Since we will be dealing with multiple policies, we will use superscripts to refer to them---that is, we use $\pi^i$ to refer to the $i$-th policy. To keep the notation simple, we will refer to the SFs of policy $\pi^i$ as $\vpsi^i$. 
We define the action-value function (or $Q$-function) of policy $\pi$ under reward $r_{\w}$ as 
{
\begin{align}
\label{def:qf}
     Q^\pi_{\w}(s,a) \triangleq \nonumber (1-\gamma)\mathbb{E} \Big[ \sum\nolimits _{t=0}^\infty \gamma^t \vphi(s_t, a_t, s_{t+1}) \cdot \w  | P, \pi, s_t = s, a_t = a \Big] \nonumber 
    & = \vpsi^\pi(s,a)\cdot \w.
\end{align}}
We define the value of a policy $\pi$ as 
$ v^\pi_{\w} \triangleq (1-\gamma)\mathbb{E} \Big[ \sum\nolimits _{t=0}^\infty \gamma^t \w\cdot \phi(s_t) | \pi,P,D \Big] \nonumber = \psi^\pi\cdot \w.$ Note that $v^{\pi}_{\w}$ is a scalar, corresponding to the expected value of policy $\pi$ under the initial state distribution $D$, given by
\begin{equation}
\label{eq:vf}
v^{\pi}_{\w} = \E[Q^{\pi}_{\w}(s,a) | D, \pi] = \E_{s \sim D, a \sim \pi(s)} Q^{\pi}_{\w}(s,a).
\end{equation}

\section{Composing policies to solve a set of MDPs}
As described, we are interested in solving all the tasks $\w \in \W$ in the set of MDPs $\mdpset$ defined in~(\ref{eq:mdpset}). We will approach this problem by learning policies associated with specific rewards \w\ and then composing them to build a higher-level policy that performs well across all the tasks. We call this higher-level policy a \emph{generalized policy}, defined as~\citep{barreto2020fast}:
 \begin{definition}[Generalized policy]
\label{def:gp}
Given a set of MDPs \mdpset, a \emph{generalized policy} is a function $\pi: S \times \W \mapsto \mathcal{P}(A)$ that maps a state $s$ and a task \w\ onto a distribution over actions.
\end{definition}
 We can think of a generalized policy as a regular policy parameterized by a task, since for a fixed \w\ we have $\pi(\cdot; \w): S \mapsto \mathbb{P}(A)$. We now focus our attention on a specific class of generalized policies that are composed of other policies:


\begin{definition}[SIP]
\label{def:sip}
Given a set of MDPs $\mdpset$ and a set of $n$ policies $\Pi^n = \{\pi^i \}_{i=1}^n$, a \emph{set improving policy} (SIP) \sip\ is any generalized policy such that:
\begin{equation}
\label{eq:sip_def}
\vsipfull \ge v_{\w}^{i} \text{ for all } \pi^i \in \Pi^n \text{ and all } \w \in \W,
\end{equation}
where $\vsipfull$ and $v_{\w}^{i}$ are the value functions of $\sipfull(\cdot; \w)$ and the policies $\pi^i \in \Pi^n$ under reward $r_{\w}$. 
\end{definition}
We have been deliberately vague about the specific way the policies $\pi^i \in \Pi^n$ are combined to form a SIP to have as inclusive a concept as possible. We now describe two concrete ways to construct a SIP.

\begin{definition}[SMP]
\label{def:smp}
Let $\Pi^n = \{\pi^i \}_{i=1}^n$ be a set of $n$ policies and let $v^{i}$ be the corresponding value functions defined analogously to~(\ref{eq:vf}) for an arbitrary reward. A \emph{set-max policy} (SMP) is defined as
\begin{equation*}
\pi^{\text{SMP}}_{\Pi^n}(s;\w) = \pi^{k}(s), \text{ with } k= \arg\max_{i \in [1,\ldots,n]}  v_{\w}^{i}. \end{equation*}
\end{definition}
Combining the concepts of SMP and SFs we can build a SIP for $\mdpset$. Given the SFs of the policies $\pi^i \in \Pi^n$, $\{ \vpsi^i \}_{i=1}^n$, we can quickly compute a generalized SMP as
\begin{equation}
\label{eq:smp_sip}
\smp_{\Pi^n}(s; \w) =  \pi^k(s), \text{ with } k = \arg\max_{i \in [1,\ldots,n]} \{\w\cdot \psi^i\}.
\end{equation}
Since the value of a SMP under reward \w\ is given by $\vsmpfull =  \max_{i\in [1,\ldots,n]} v_{\w}^{i}$, it trivially qualifies as a SIP as per Definition~\ref{def:sip}. In fact, the generalized policy \smpfull\ defined in~(\ref{eq:smp_sip}) is in some sense the most conservative SIP possible, as it will always satisfy~\eqref{eq:sip_def} with equality. This means that any other SIP will perform at least as well as the SIP induced by SMP. We formalize this notion below:
\begin{restatable}{lem}{lemsmpsetmax}
\label{lemma:sip>=setmax}
Let \smpfull\ be a SMP defined as in~(\ref{eq:smp_sip}) and let $\pi: S \times \W \mapsto \mathbb{P}(A)$ be any generalized policy. Then, given a set of $n$ policies $\Pi^n$, $\pi$ is a SIP if and only if
$
v^{\pi}_{\Pi^n, \w} \ge \vsmpfull
$
for all $\w \in \mathcal{W}$.
\end{restatable}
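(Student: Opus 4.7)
The plan is to prove the biconditional directly from the definitions by exploiting the key identity $v^{\text{SMP}}_{\Pi^n, \w} = \max_{i\in[1,\ldots,n]} v^i_{\w}$, which is established right before the lemma statement and follows from the definition of SMP in \eqref{eq:smp_sip}. This identity is really the only substantive fact needed; the rest is just unwinding the definition of a SIP.

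First I would prove the ``if'' direction. Assume $v^{\pi}_{\Pi^n, \w} \ge \vsmpfull$ holds for every $\w \in \W$. For any fixed $\pi^i \in \Pi^n$, the definition of max gives $\vsmpfull = \max_{j} v^{j}_{\w} \ge v^{i}_{\w}$. Chaining the two inequalities yields $v^{\pi}_{\Pi^n, \w} \ge v^{i}_{\w}$ for every $\pi^i \in \Pi^n$ and every $\w \in \W$, which is exactly condition~\eqref{eq:sip_def} in Definition~\ref{def:sip}. Hence $\pi$ is a SIP.

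Next I would prove the ``only if'' direction. Assume $\pi$ is a SIP, so by Definition~\ref{def:sip} we have $v^{\pi}_{\Pi^n, \w} \ge v^{i}_{\w}$ for every $i \in [1,\ldots,n]$ and every $\w \in \W$. Since the left-hand side does not depend on $i$, we may take the maximum over $i$ on the right-hand side to obtain $v^{\pi}_{\Pi^n, \w} \ge \max_{i\in[1,\ldots,n]} v^{i}_{\w} = \vsmpfull$, which is the desired inequality.

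There is no real obstacle here: both directions reduce to the observation that $\vsmpfull$ equals the pointwise (in $\w$) maximum over the value functions of the constituent policies, so the SIP condition ``dominates every $v^{i}_{\w}$'' is logically equivalent to ``dominates their maximum $\vsmpfull$''. The only care needed is to state the argument for arbitrary but fixed $\w$ and then quantify over $\w \in \W$ at the end, so that the ``for all $\w$'' on both sides of the biconditional are handled cleanly.
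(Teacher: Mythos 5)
Your proof is correct and follows essentially the same route as the paper's: both directions reduce to the identity $\vsmpfull = \max_{i} v^{i}_{\w}$, chaining the SIP condition with the max in one direction and taking the max over the SIP inequalities in the other. No gaps; this matches the paper's argument.
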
 

Due to space constraints, all the proofs can be found in the supplementary material. Lemma~\ref{lemma:sip>=setmax} allows us to use SMP to derive results that apply to all SIPs. For example, a lower bound for \vsmpfull\ automatically applies to all possible \vsipfull. Lemma~\ref{lemma:sip>=setmax} also allows us to treat SMP as a criterion to determine whether a given generalized policy qualifies as a SIP. We illustrate this by introducing a second candidate to construct a SIP called \emph{generalized policy improvement}~\citep[GPI]{barreto2017successor, barreto2018transfer, barreto2020fast}: 
\begin{definition}[GPI policy]
\label{def:gpi}
Given a set of $n$ policies $\Pi^n=\{\pi^i \}_{i=1}^n$ and corresponding $Q$-functions $Q_{\w}^i$ computed under an arbitrary reward $\w$, the GPI policy is defined as
$$
\pi^{\text{GPI}}_{\Pi^n}(s;\w) =  \arg\max_a \max_i Q_{\w}^i(s,a).
$$
\end{definition}
Again, we can combine GPI and SFs to build a generalized policy. Given the SFs of the policies $\pi^i \in \Pi^n$, $\{ \vpsi^i \}_{i=1}^n$, we can quickly compute the generalized GPI policy as
$
\gpi_{\Pi^n}(s; \w) =  \arg\max_a \max_i \vpsi^i(s,a) \cdot \w.
$
Note that the maximization in GPI is performed in each state and uses the $Q$-functions of the constituent policies. In contrast, SMP maximizes over value functions (not $Q$-functions), with an expectation over states taken with respect to the initial state distribution $D$. For this reason, GPI is a stronger composition than SMP. We now formalize this intuition: 
\begin{restatable}{lem}{lemgpisetmax}
\label{lemma:gpi>=setmax}
For any reward $\w \in \W$ and any set of policies $\Pi^n$, we have that
$
\vgpifull \ge \vsmpfull.
$
\end{restatable}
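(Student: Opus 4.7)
The plan is to reduce the statement to the classical generalized policy improvement theorem of \citet{barreto2017successor}, which gives a per-state comparison, and then pass to expectations under $D$ using the elementary inequality $\E[\max_i X_i] \ge \max_i \E[X_i]$.

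First I would invoke the GPI theorem for the set $\Pi^n$ at reward $\w$, which yields the pointwise bound $v^{\gpi_{\Pi^n}}_{\w}(s) \ge \max_{i \in [1,\ldots,n]} v^{i}_{\w}(s)$ for every $s \in S$, where $v^{\pi}_{\w}(s)$ denotes the state-value function of policy $\pi$ under $r_{\w}$. This is the standard per-state improvement guarantee enjoyed by acting greedily with respect to the upper envelope of the constituent $Q$-functions, and it is exactly the reason GPI is stronger than SMP: the maximization happens inside the expectation over trajectories, state by state.

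Next I would take expectations under $D$ on both sides. The left-hand side becomes $\vgpifull = \E_{s\sim D}[v^{\gpi_{\Pi^n}}_{\w}(s)]$ by the definition~\eqref{eq:vf}. For the right-hand side I would apply $\E_{s\sim D}[\max_i v^{i}_{\w}(s)] \ge \max_i \E_{s\sim D}[v^{i}_{\w}(s)] = \max_i v^{i}_{\w}$, where the inequality is a direct consequence of $\max_i v^{i}_{\w}(s) \ge v^{j}_{\w}(s)$ for every fixed $j$ followed by taking expectations and then the maximum over $j$. Finally, the definition of SMP immediately below~\eqref{eq:smp_sip} gives $\vsmpfull = \max_i v^{i}_{\w}$, which chains the inequalities into $\vgpifull \ge \vsmpfull$, as desired.

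I do not anticipate a genuine obstacle here; the only subtle point is that GPI and SMP compare quantities at different granularities (per state versus expectation over $D$), so the commutation of max and expectation has to go in the correct direction, and the GPI theorem has to be cited in its pointwise form rather than in an expectation form, since otherwise one would only get $\vgpifull \ge \max_i v^{i}_{\w}$ directly and the intermediate $\E[\max]$ step would be unnecessary. Either way, the argument is short and relies only on a known result plus Jensen-type monotonicity.
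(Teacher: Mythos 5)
Your proof is correct and follows essentially the same route as the paper's: establish the pointwise bound $v^{\text{GPI}}_{\Pi^n,\w}(s) \ge \max_i v^i_{\w}(s)$ and then take expectations under $D$, using $\E_D[\max_i v^i_{\w}(s)] \ge \max_i \E_D[v^i_{\w}(s)] = \vsmpfull$. The only cosmetic difference is that the paper derives the pointwise value bound explicitly from the $Q$-level GPI guarantee of \citet{barreto2017successor} (max over actions versus expectation over the policy's actions), whereas you cite that value-level consequence directly; in fact your explicit $\E[\max]\ge\max[\E]$ step is stated more carefully than the paper's final equality.
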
 
Lemma~\ref{lemma:gpi>=setmax} implies that for any set of policies it is always better to use a GPI policy rather than an SMP (as we will confirm in the experiments). As a consequence, it also certifies that the generalized GPI policy $\gpi_{\Pi^n}(s; \w)$ qualifies as a SIP (Lemma~\ref{lemma:sip>=setmax}).

We have described two ways of constructing a SIP by combining SMP and GPI with SFs. Other similar strategies might be possible, for example by using local SARSA \citep{russell2003q, sprague2003multiple} as the basic mechanism to compose a set of value functions. We also note that in some cases it is possible to define a generalized policy (\cref{def:gp}), that is not necessarily a SIP (\cref{eq:smp_sip}), but is guaranteed to perform better than any SIP in expectation. For example, a combination of maximization, randomization and local search have been shown to be optimal in expectation among generalized policies in tabular MDPs with collectible rewards \citep{zahavy2020planning}. That said, we note that some compositions of policies that may at first seem like a SIP do not qualify as such. For example, a mixed policy is a linear (convex) combination of policies that assigns probabilities to the policies in the set and samples from them. When the mixed policy is mixing the best policy in the set with a less performant policy then it will result in a policy that is not as good as the best single policy in the set~\citep{zahavy2020planning}.

\noindent{\bf Problem formulation.} We are now ready to formalize the problem we are interested in. Given a set of MDPs $\mdpset$, as defined in~(\ref{eq:mdpset}), we want to construct a set of $n$ policies $\Pi^n = \{\pi^i \}_{i=1}^n$, such that the performance of the SMP defined on that set $\smpfull$ will have the optimal worst-case performance over all rewards $\w \in \W$. That is, we want to solve the following problem:
\begin{equation}
\label{eq:main_problem}
\arg\max_{\Pi^n \subseteq \Pi} \min_{\w} \vsmpfull.
\end{equation}
Note that, since $\vsmpfull \le \vsipfull$ for any SIP, $\Pi^n$ and \w, as shown in Lemma~\ref{lemma:sip>=setmax}, by finding a good set for~\eqref{eq:main_problem} we are also improving the performance of all SIPs (including GPI). 

\section{An iterative method to construct a set-max policy}
We now present and analyze an iterative algorithm to solve problem~\eqref{eq:main_problem}.  
We begin by defining the worst case or adversarial reward associated with the generalized SMP policy:
\begin{definition}[Adversarial reward for an SMP]
\label{def:worstw}
Given a set of policies $\Pi^n$, we denote by $\wapi = \arg \min_{\w\in B_2} \vsmpfull$ the worst case reward w.r.t the SMP $\smp_{\Pi^n}$ defined in~\eqref{eq:smp_sip}. In addition, the value of the SMP w.r.t to $\wapi$ is defined by $\vsmpfulla = \min_{\w\in B_2} \vsmpfull$.
\end{definition}
We are interested in finding a set of policies $\Pi^n$ such that the performance of the resulting SMP will be optimal w.r.t its adversarial reward \wapi. This leads to a reformulation of~\eqref{eq:main_problem} as a max-min-max optimization  for discovering robust policies:
\begin{equation}
\label{eq:max_min_max}
\arg\max_{\Pi^n \subseteq \Pi} \vsmpfulla = \arg\max_{\Pi^n \subseteq \Pi} \min_{\w\in B_2} \vsmpfull 
= \arg\max _{\Pi^n \subseteq \Pi} \min_{\w\in B_2} \max_{i \in [1,..,n]} \vpsi^i \cdot \w.
\end{equation}
\begin{wrapfigure}{R}{0.5\linewidth}
\vspace{-0.8cm}
\centering
\begin{minipage}{0.5\textwidth}
\begin{algorithm}[H]
\caption{SMP worst case policy iteration}\label{alg:greedy-iterative}
\begin{algorithmic}
\STATE \textbf{Initialize:} Sample $\w \sim N(\bar 0,\bar 1), \Pi^0 \leftarrow \{\ \}, \pi^1 \leftarrow \arg\max_{\pi \in \Pi} \w \cdot \vpsi^\pi$, $t \leftarrow 1$ \\
\STATE $\bar{v}^{\text{SMP}}_{\Pi^1} \leftarrow  {-||\vpsi^1||}$
\REPEAT
    \STATE $\Pi^t \leftarrow \Pi^{t-1} + \{\pi^t\}$
    \STATE $\bar{\w}^{\text{SMP}}_{\Pi^t}$ $\leftarrow$ solution to~\eqref{eq:opt_w} \\
    \STATE $\pi^{t+1}$ $\leftarrow$ solution of the RL task $\bar{\w}^{\text{SMP}}_{\Pi^t}$ \\
    \STATE $t \leftarrow t+1$
\UNTIL{${v}^{t}_{\wapi} \le \bar{v}^{\text{SMP}}_{\Pi^{t-1}}$}
\STATE \textbf{return} $\Pi^{t-1}$
\end{algorithmic}
\end{algorithm}
\end{minipage}
\vspace{-0.5cm}
\end{wrapfigure}

The order in which the maximizations and the minimization are performed in \eqref{eq:max_min_max} is important. $(i)$ The inner maximization over policies (or SFs), by the SMP, is performed last. This means that, for a fixed set of policies $\Pi^n$ and a fixed reward \w, SMP selects the best policy in the set. $(ii)$ The minimization over rewards \w\ happens second, that is, for a fixed set of policies $\Pi^n$, we compute the value of the generalized SMP $\smp_{\Pi^n}(\cdot; \w)$ for any reward \w, and then minimize the maximum of these values. $(iii)$ Finally, for any set of policies, there is an associated worst case reward for the SMP, and we are looking for policies that maximize this value. 

The inner maximization~$(i)$ is simple: it comes down to computing $n$ dot-products $\vpsi^i \cdot \w$, $i = 1, 2, \ldots, n$, and comparing the resulting values. The minimization problem $(ii)$ is slightly more complicated, but fortunately easy to solve. To see this, note that this problem can be rewritten as:
\begin{align}
\label{eq:opt_w}
\wapi = \arg \min_{\w \in \mathbb{B}_2}   \max_{i\in[1,\ldots,n]} \{ \w\cdot \vpsi^1,\ldots, \w \cdot \vpsi^n\} .\enspace s.t.\enspace   ||\w||^2-1 \le 0.
\end{align}
\cref{eq:opt_w} is a convex optimization problem that can be easily solved using standard techniques, like gradient descent, and off-the-shelf solvers~\citep{diamond2016cvxpy, boyd2004convex}. We note that the minimizer of \cref{eq:opt_w} is a function of policy set. As a result, the set forces the worst case reward to make a trade-off -- it has to “choose” the coordinates it “wants” to be more adversarial for. This trade-off is what encourages the worst case reward to be diverse across iterations (w.r.t different sets). We note that this property holds since we are optimizing over $\mathbb{B}_2$ but it will not necessary be the case for other convex  sets. For example, in  the case of $\mathbb{B}_\infty$ the internal minimization problem in the above has a single solution - a vector with -1 in all of its coordinates.

The outer maximization problem $(iii)$ can be difficult to solve if we are searching over all possible sets of policies $\Pi^n \subseteq \Pi$. Instead, we propose an incremental approach in which policies $\pi^i$ are successively added to an initially empty set $\Pi^0$. This is possible because the solution \wapi\ of~(\ref{eq:opt_w}) gives rise to a well-defined RL problem in which the rewards are given by $r_{\w}(s,a,s') = \wapi \cdot \vphi(s,a,s')$. This problem can be solved using any standard RL algorithm. So, once we have a solution \wapi\ for~(\ref{eq:opt_w}), we solve the induced RL problem using any algorithm and add the resulting policy $\pi^{n+1}$ to $\Pi^n$ (or, rather, the associated SFs $\vpsi^{n+1}$).

Algorithm~\ref{alg:greedy-iterative} has a step by step description of the proposed method. The algorithm is initialized by adding a policy $\pi^1$ that maximizes a random reward vector $\w$ to the set $\Pi^0$, such that $\Pi^1 = \{ \pi^1\}$. At each subsequent iteration $t$ the algorithm computes the worst case reward $\bar{\w}^{\text{SMP}}_{\Pi^t}$ w.r.t to the current set $\Pi^t$ by solving~\eqref{eq:opt_w}. The algorithm then finds a policy $\pi^{t+1}$ that solves the task induced by $\bar{\w}^{\text{SMP}}_{\Pi^t}$. If the value of $\pi^{t+1}$ w.r.t $\bar{\w}^{\text{SMP}}_{\Pi^t}$ is strictly larger than $\bar{v}^{\text{SMP}}_{\Pi^{t}}$ the algorithm continues for another iteration, with $\pi^{t+1}$ added to the set. Otherwise, the algorithm stops.
As mentioned before, the set of policies $\Pi^t$ computed by Algorithm~\ref{alg:greedy-iterative} can also be used with GPI. The resulting GPI policy will do at least as well as the SMP counterpart on any task \w\ (\cref{lemma:gpi>=setmax}); in particular, the GPI's worst-case performance will be lower bounded by $\vsmpfulla$.

\subsection{Theoretical analysis}

Algorithm~\ref{alg:greedy-iterative} produces a sequence of policy sets $\Pi^1, \Pi^2, \ldots$ 
The definition of SMP guarantees that enlarging a set of policies always leads to a soft improvement in performance, so $\bar{v}^{\text{SMP}}_{\Pi^{t+1}} \ge \bar{v}^{\text{SMP}}_{\Pi^{t}} \ge \ldots \ge \bar{v}^{\text{SMP}}_{\{\pi^1\}}$. We now show that the improvement in each iteration of our algorithm is in fact strict.

\begin{restatable}[Strict improvement]{thm}{thmalg}
\label{thm:alg1}
Let $\Pi^1, \ldots, \Pi^t$ be the sets of policies constructed by \cref{alg:greedy-iterative}. We have that the worst-case performance of the SMP induced by these set is strictly improving in each iteration, that is: $\bar{v}^{\text{SMP}}_{\Pi^{t+1}} > \bar{v}^{\text{SMP}}_{\Pi^{t}}$. Furthermore, when the algorithm stops, there does not exist a single policy $\pi^{t+1}$ such that adding it to $\Pi^t$ will result in improvement: $\nexists \; \pi^{t+1} \in \Pi \; \text{s.t.} \; \bar{v}^{\text{SMP}}_{\Pi^{t} + \{\pi\}} > \bar{v}^{\text{SMP}}_{\Pi^{t}}$.
\end{restatable}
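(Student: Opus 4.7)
The plan is to cast the worst-case objective in its dual form, so that \cref{thm:alg1} reduces to a statement about projection of the origin onto a convex hull of successor features. Writing $\max_{i \le t} \vpsi^i \cdot \w = \max_{\alpha \in \Delta_t} \w \cdot \sum_i \alpha_i \vpsi^i$ (with $\Delta_t$ the probability simplex on $t$ atoms) and applying Sion's minimax theorem to swap $\min_{\w \in \mathbb{B}_2}$ and $\max_{\alpha \in \Delta_t}$, together with $\min_{\w \in \mathbb{B}_2} \w \cdot v = -\|v\|$, yields
\begin{equation*}
\bar{v}^{\text{SMP}}_{\Pi^{t}} \;=\; -\min_{\alpha \in \Delta_t} \Bigl\| \sum_{i=1}^{t} \alpha_i \vpsi^i \Bigr\| \;=\; -\|p_t\|,
\qquad p_t \;:=\; \arg\min_{x \in C_t} \|x\|,
\end{equation*}
where $C_t := \mathrm{conv}(\{\vpsi^i\}_{i=1}^{t})$ is the convex hull of the constituent SFs. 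Provided $p_t \ne 0$ (the generic case, since $\vpsi^i \in [0,1]^d$ forces $0 \in C_t$ only if some $\vpsi^i = 0$), the unique minimizer in \eqref{eq:opt_w} is $\wapit = -p_t/\|p_t\|$.

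For strict improvement, the loop continues precisely when $v^{t+1}_{\wapit} = \vpsi^{t+1} \cdot \wapit > \bar{v}^{\text{SMP}}_{\Pi^{t}}$; substituting $\wapit = -p_t/\|p_t\|$ and $\bar{v}^{\text{SMP}}_{\Pi^{t}} = -\|p_t\|$ and multiplying through by $\|p_t\|$, this rewrites as $p_t \cdot \vpsi^{t+1} < \|p_t\|^2$. The variational inequality characterizing $p_t \in K$ as the projection of the origin onto any closed convex set $K$ is $p_t \cdot x \ge \|p_t\|^2$ for all $x \in K$. The continuation condition is thus exactly the statement that $p_t$ \emph{fails} to be the projection of the origin onto $C_{t+1}$, so by uniqueness of projection onto a nonempty closed convex set $\|p_{t+1}\| < \|p_t\|$ and hence $\bar{v}^{\text{SMP}}_{\Pi^{t+1}} = -\|p_{t+1}\| > -\|p_t\| = \bar{v}^{\text{SMP}}_{\Pi^{t}}$.

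For the second claim, since $\pi^{t+1}$ is RL-optimal for reward $\wapit$ we have $v^{t+1}_{\wapit} = \max_{\pi \in \Pi} \vpsi^\pi \cdot \wapit$. The stopping condition $v^{t+1}_{\wapit} \le \bar{v}^{\text{SMP}}_{\Pi^{t}}$ thus gives $\vpsi^\pi \cdot \wapit \le -\|p_t\|$ for \emph{every} $\pi \in \Pi$, equivalently $\vpsi^\pi \cdot p_t \ge \|p_t\|^2$ for every such $\pi$. Running the projection argument in reverse, this is exactly the variational inequality certifying $p_t$ as the projection of the origin onto $\mathrm{conv}(C_t \cup \{\vpsi^\pi\})$ for any added $\pi$, so $\bar{v}^{\text{SMP}}_{\Pi^{t} + \{\pi\}} = -\|p_t\| = \bar{v}^{\text{SMP}}_{\Pi^{t}}$ for every single policy $\pi \in \Pi$.

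The main obstacle is the duality step; Sion's hypotheses are routine here ($\mathbb{B}_2$ is compact convex, $\Delta_t$ is convex, the bilinear objective is continuous and linear in each argument) but should be explicitly verified. The remaining subtlety is the corner case $p_t = 0$, which would make $\wapit$ non-unique; but non-negativity of the SFs forces $p_t = 0$ only if some constituent policy has identically zero SFs, a degenerate situation easily excluded at initialization.
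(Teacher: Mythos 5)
Your proof is correct, but it takes a genuinely different route from the paper's. The paper proves strict improvement by a chain of inequalities followed by a uniqueness argument for the minimizer of \cref{eq:opt_w}: it forms the partial Lagrangian of the reformulated problem, shows via a separate lemma (\cref{lem:kappa}, using non-negativity of the SFs) that the norm constraint is binding so that $\kappa^\star>0$, concludes that the Lagrangian is strongly convex in $\w$ and hence the worst-case reward is unique, and then derives a contradiction with the continuation condition $\vpsi^{t+1}\cdot\wapit > \bar v^{\text{SMP}}_{\Pi^t}$; the stopping claim is handled by a short inequality chain exploiting that $\pi^{t+1}$ is optimal for $\wapit$. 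You instead dualize the inner min--max (Sion/von Neumann on $\mathbb{B}_2\times\Delta_t$) to identify $-\bar v^{\text{SMP}}_{\Pi^t}$ with the distance from the origin to the convex hull of the SFs, so that $\wapit=-p_t/\|p_t\|$, and then both parts of \cref{thm:alg1} fall out of the projection variational inequality: continuation is exactly failure of the inequality at $\vpsi^{t+1}$, giving $\|p_{t+1}\|<\|p_t\|$ by uniqueness of the min-norm point, while stopping certifies $p_t$ as the min-norm point of any enlarged hull. Both arguments need the same non-degeneracy (no identically zero SFs): the paper invokes it in \cref{lem:kappa}, you invoke it to exclude $p_t=0$, so this is not a gap. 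What your approach buys is an explicit closed form for the worst-case value and reward and a transparent geometric picture (the worst-case value strictly improves because the distance from the origin to the SF hull strictly shrinks), consonant with the Frank--Wolfe/polytope view in the paper's appendix; the paper's approach buys a more standard convex-duality argument that reuses its reformulation lemmas and avoids the minimax swap. The only step you assert without proof is that $-p_t/\|p_t\|$ is the \emph{unique} minimizer of \cref{eq:opt_w}; this is needed because the algorithm's $\wapit$ is whatever solver output is returned, and it follows in one line from Cauchy--Schwarz together with the variational inequality $p_t\cdot x\ge\|p_t\|^2$ on the hull, so you should spell it out, mirroring the role of the uniqueness step in the paper.
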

In general we cannot say anything about the value of the SMP returned by \cref{alg:greedy-iterative}. However, in some special cases we can
upper bound it. One such case is when the SFs lie in the simplex.
\begin{restatable}[Impossibility result]{lem}{lemstop}
\label{lemma:imposibility_res}
For the special case where the SFs associated with any policy are in the simplex, the value of the SMP w.r.t the worst case reward for any set of policies is less than or equal to $-{1}/{\sqrt{d}}.$ In addition, there exists an MDP where this upper bound is attainable.
\end{restatable}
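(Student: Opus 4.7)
The plan is to prove the two parts separately: first an upper bound valid for every MDP whose SFs lie in the simplex, and then an explicit MDP witnessing the bound.

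For the upper bound, my strategy is to exhibit a single adversarial direction that is bad for every simplex vector simultaneously. Take $\w_0 = -\mathbf{1}/\sqrt{d}$, i.e.\ the unit vector with all coordinates equal to $-1/\sqrt{d}$. It has $\|\w_0\|_2 = 1$, so it lies in $\mathbb{B}_2$. For any policy $\pi^i$ with $\vpsi^i \in \Delta^{d-1}$ we have
\begin{equation*}
\w_0 \cdot \vpsi^i \;=\; -\tfrac{1}{\sqrt{d}}\sum_{j=1}^{d}\vpsi^i_j \;=\; -\tfrac{1}{\sqrt{d}},
\end{equation*}
because simplex entries sum to $1$. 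Therefore $\max_{i\in[1,\ldots,n]} \w_0\cdot\vpsi^i = -1/\sqrt{d}$ no matter what $\Pi^n$ is, and plugging $\w_0$ into the minimization in~\eqref{eq:opt_w} gives $\vsmpfulla \le \max_i \w_0 \cdot \vpsi^i = -1/\sqrt{d}$. This is the entire argument for part one; no convex-duality machinery is needed because the ``uniform negative'' direction is a universal certifier.

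For attainability, the plan is to build a trivial MDP in which every policy produces the same SF vector $\mathbf{1}/d$, so that enlarging $\Pi^n$ cannot help. Concretely, take any tabular MDP with $d$ states in which the features $\vphi(s,a,s')$ do not depend on $s,a,s'$ and equal the constant vector $\mathbf{1}/d$ (one can also use a single-state MDP). Then for every policy $\pi$, $\vpsi^\pi = \mathbf{1}/d \in \Delta^{d-1}$. For any set $\Pi^n$,
\begin{equation*}
\vsmpfull \;=\; \max_{i} \w\cdot\vpsi^i \;=\; \tfrac{1}{d}\sum_{j=1}^d w_j,
\end{equation*}
and minimizing this linear functional over $\mathbb{B}_2$ via Cauchy--Schwarz gives the minimizer $\w^\star = -\mathbf{1}/\sqrt{d}$ with value exactly $-1/\sqrt{d}$. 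Hence $\vsmpfulla = -1/\sqrt{d}$ for every $\Pi^n$, matching the upper bound.

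I do not expect any real obstacle here: the upper bound reduces to the algebraic identity that any simplex vector has $\ell_1$ norm $1$, and attainability just needs an MDP where all policies are indistinguishable in feature space. The only mild subtlety is confirming that the constructed MDP indeed satisfies the simplex assumption on SFs; this is immediate because the $(1-\gamma)$ normalization in~\eqref{def:sf} turns a constant feature $\mathbf{1}/d$ into the SF $\mathbf{1}/d$, which lies in $\Delta^{d-1}$.
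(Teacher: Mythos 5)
Your proof is correct, but it takes a more direct route than the paper on both halves. For the upper bound, the paper does not plug in a specific reward: it first argues that the best possible value over all sets is attained by the set of \emph{all} policies, then relaxes the inner maximization from the achievable SFs to the entire simplex $\Delta^{d-1}$, reducing the problem to $\min_{\w\in\mathbb{B}_2}\max_i w_i = -1/\sqrt{d}$. You instead exhibit the single certificate $\w_0=-\mathbf{1}/\sqrt{d}$ and use the algebraic fact that every simplex vector has inner product exactly $-1/\sqrt{d}$ with it, which bounds $\vsmpfulla$ for an arbitrary set in one line; this is simpler and avoids any reasoning about the outer maximization over sets. For attainability, the paper's witness is a set of $d$ policies whose SFs are the vertices (one-hot vectors) of the simplex, so that the maximum over the set coincides with the maximum over $\Delta^{d-1}$; your witness is a constant-feature MDP in which every policy has SF $\mathbf{1}/d$, so the bound is met already with a single policy. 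Both satisfy the existential claim; the paper's example is more informative in that it reflects the tabular/state-aggregation regime the lemma is meant to speak to (and is the kind of set the algorithm could actually produce), whereas yours is deliberately degenerate but minimal. One small presentational point: your Cauchy--Schwarz step in the attainability argument is exactly the same computation as your upper-bound certificate, so you could shorten the example by citing part one for the value at $\w^\star=-\mathbf{1}/\sqrt{d}$ and only verifying that no other $\w\in\mathbb{B}_2$ does better, which Cauchy--Schwarz gives immediately.
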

One example where the SFs are in the simplex is when the features $\vphi$ are ``one-hot vectors'', that is, they only have one nonzero element. This happens for example in a tabular representation, in which case the SFs correspond to stationary state distributions. Another example are the features induced by state aggregation, since these are simple indicator functions associating states to clusters~\citep{singh1995reinforcement}.  
We will show in our experiments that when state aggregation is used our algorithm achieves the upper bound of Lemma~\ref{lemma:imposibility_res} in practice. 

Finally, we observe that not all the policies in the set $\Pi^t$ are needed at each point in time, and we can guarantee strict improvement even if we remove the "inactive" policies from $\Pi^t$, as we show below. 
\begin{definition}[Active policies]
\label{def:active}
Given a set of $n$ policies $\Pi^n$, and an associated worst case reward $\wapi,$ the subset of active policies $\Pi_a(\Pi^n)$ are the policies in $\Pi^n$ that achieve $\vsmpfulla$ w.r.t $\wapi:$ 
 $\Pi_a(\Pi^n) = \left\{\pi \in \Pi^n: \vpsi^\pi \cdot \wapi = \vsmpfulla \right\}.$
\end{definition}
\begin{restatable}[Sufficiency of Active policies]{thm}{thmactive}
\label{thm:active}
For any set of policies $\Pi^n$, $\pi^{\text{SMP}}_{\Pi_a(\Pi^n)}$ achieves the same value w.r.t the worst case reward as $\pi^{\text{SMP}}_{\Pi^n}$, that is, $\vsmpfulla = \bar v^{\text{SMP}}_{\Pi_a(\Pi^n)}$.
\end{restatable}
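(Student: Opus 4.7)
My plan is to show the two directions of the equality $\vsmpfulla = \bar{v}^{\text{SMP}}_{\Pi_a(\Pi^n)}$ separately, with the easy direction following from monotonicity of SMP under set inclusion and the harder direction following from a contradiction argument that shows $\wapi$ remains an optimal adversarial reward even after we restrict to the active subset.

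\textbf{Easy direction.} Since $\Pi_a(\Pi^n) \subseteq \Pi^n$, for every $\w \in B_2$ we have $\max_{\pi \in \Pi_a(\Pi^n)} \vpsi^\pi \cdot \w \le \max_{\pi \in \Pi^n} \vpsi^\pi \cdot \w$. Taking the minimum over $\w \in B_2$ on both sides yields $\bar{v}^{\text{SMP}}_{\Pi_a(\Pi^n)} \le \vsmpfulla$.

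\textbf{Hard direction (the main step).} I claim that $\wapi$ itself is a minimizer of $\w \mapsto \max_{\pi \in \Pi_a(\Pi^n)} \vpsi^\pi \cdot \w$ over $B_2$. Once this claim is established we are done: every $\pi \in \Pi_a(\Pi^n)$ satisfies $\vpsi^\pi \cdot \wapi = \vsmpfulla$ by Definition~\ref{def:active}, so the max over the active set at $\wapi$ equals $\vsmpfulla$, and hence $\bar{v}^{\text{SMP}}_{\Pi_a(\Pi^n)} = \vsmpfulla$. To prove the claim, suppose for contradiction that there exists $\w' \in B_2$ with
\[
\max_{\pi \in \Pi_a(\Pi^n)} \vpsi^\pi \cdot \w' < \vsmpfulla.
\]
For $\epsilon \in [0,1]$ set $\w_\epsilon = (1-\epsilon)\wapi + \epsilon \w'$, which lies in $B_2$ by convexity. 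For any active $\pi$, linearity gives $\vpsi^\pi \cdot \w_\epsilon = (1-\epsilon)\vsmpfulla + \epsilon (\vpsi^\pi \cdot \w')$, and taking the max over active $\pi$ yields a value strictly below $\vsmpfulla$ whenever $\epsilon > 0$. For each inactive $\pi \in \Pi^n \setminus \Pi_a(\Pi^n)$ we have $\vpsi^\pi \cdot \wapi < \vsmpfulla$ by definition, so since $\vpsi^\pi \cdot \w_\epsilon$ is continuous in $\epsilon$, there exists $\epsilon_\pi > 0$ such that $\vpsi^\pi \cdot \w_\epsilon < \vsmpfulla$ for all $\epsilon \in [0,\epsilon_\pi]$. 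Since $\Pi^n$ is finite, we can pick $\epsilon^\ast > 0$ smaller than $\min_\pi \epsilon_\pi$, and then $\max_{\pi \in \Pi^n} \vpsi^\pi \cdot \w_{\epsilon^\ast} < \vsmpfulla$, contradicting the fact that $\wapi$ is a minimizer of $\max_{\pi \in \Pi^n} \vpsi^\pi \cdot \w$ over $B_2$ with optimal value $\vsmpfulla$.

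\textbf{Remarks on obstacles.} The only nontrivial part is the contradiction step, and within it the key subtlety is handling the inactive policies: we must ensure that a small perturbation of $\wapi$ in the direction of the hypothetical better $\w'$ does not inflate any inactive policy's value above $\vsmpfulla$. This is exactly where the strict inequality in the definition of ``inactive'' and the finiteness of $\Pi^n$ are used; both are built into the setup, so no additional assumptions are required.
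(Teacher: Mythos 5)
Your proof is correct. The easy direction via set inclusion and the key claim that $\wapi$ remains a minimizer for the restricted problem together give the equality, and your perturbation argument for the claim is sound: mixing a hypothetically better $\w'$ with $\wapi$ strictly lowers the value of every active policy by linearity, while the strict slack of the finitely many inactive policies persists for all sufficiently small $\epsilon$, so $\w_{\epsilon^\ast}\in B_2$ would beat $\vsmpfulla$ on the full set $\Pi^n$, a contradiction. The paper reaches the same conclusion by a different presentation: it rewrites the min--max as an epigraph-form convex program (minimize $\gamma$ subject to $\gamma \ge \w\cdot\vpsi^i$ for all $i$ and $\|\w\|^2\le 1$), observes that inactive policies correspond to constraints that are not binding at an optimum, and invokes the standard convex-optimization fact that dropping non-binding constraints leaves the optimal value unchanged. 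The underlying content is identical --- constraints with slack at the optimum do not affect the optimal value --- but your argument proves this fact from first principles, making explicit exactly where convexity of $B_2$, linearity in $\w$, finiteness of $\Pi^n$, and the strict inequality defining inactivity are used, whereas the paper's proof is terser and sets up the epigraph/Lagrangian formulation that it then reuses in the uniqueness argument for Theorem~\ref{thm:alg1}. Either route is a complete proof of the theorem.
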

\cref{thm:active} implies that once we have found $\wapi$ we can remove the inactive policies from the set and still guarantee the same worst case performance. Furthermore, we can continue with \cref{alg:greedy-iterative} to find the next policy by maximizing  $\wapi$ and guarantee strict improvement via \cref{thm:alg1}. This is important in applications that have memory constraints, since it allows us to store fewer policies.

\section{Experiments \label{sec:experiments}}
We begin with a $10 \times 10$ grid-world environment (\cref{fig:traj}), where the agent starts in a random place in the grid (marked in a black color) and gains/loses reward from collecting items (marked with white color). Each item belongs to one of $d-1$ classes (here with $d=5$) and is associated with a marker: $8,O,X,Y$. In addition, there is one "no item" feature (marked in gray color). The features are one-hot vectors, \ie, for $i \in [1,d-1],$ $\phi^i(s)$ equals one when item $i$ is in state $s$ and zero otherwise (similarly $\phi^d(s)$ equals one when there is no item in state $s$). The objective of the agent is to pick up the “good” objects and avoid “bad” objects, depending on the weights of the vector $\w$.

In \cref{fig:results_d5} we report the performance of the SMP $\smp_{\Pi^t}$ w.r.t $\bar{\w}^{\text{SMP}}_{\Pi^t}$ for $d=5$. At each iteration (x-axis) of \cref{alg:greedy-iterative} we train a policy for $5 \cdot 10^5$ steps to maximize $\bar{\w}^{\text{SMP}}_{\Pi^t}$. We then compute the SFs of that policy using additional $5 \cdot 10^5$ steps and evaluate it w.r.t $\bar{\w}^{\text{SMP}}_{\Pi^t}$. 

As we can see, the performance of SMP strictly improves as we add more policies to the set (as we stated in \cref{thm:alg1}). In addition, we compare the performance of SMP with that of GPI, defined on the same sets of policies ($\Pi^t$) that were discovered by \cref{alg:greedy-iterative}. Since we do not know how to compute $\bar \w_{\Pi^t}^{\text{GPI}}$ (the worst case reward for GPI), we evaluate GPI w.r.t $\wapi$ (the blue line in \cref{fig:results_d5}).

\begin{figure}[t]
\centering
\subfigure[SMP vs. GPI]{\label{fig:results_d5}
\includegraphics[width=0.23\linewidth]{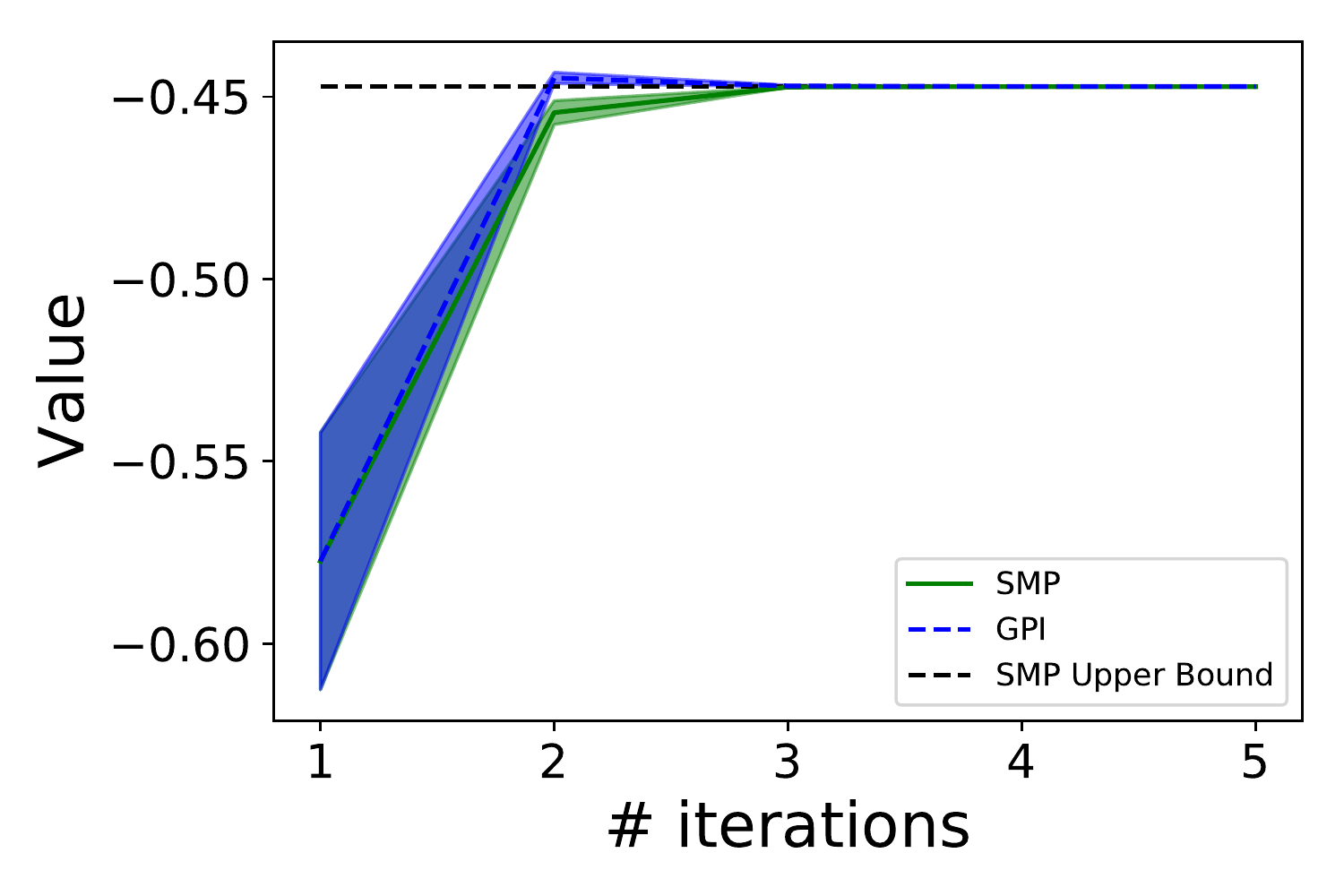}}
\subfigure[Baselines]{\label{fig:baselines}\includegraphics[width=0.23\linewidth]{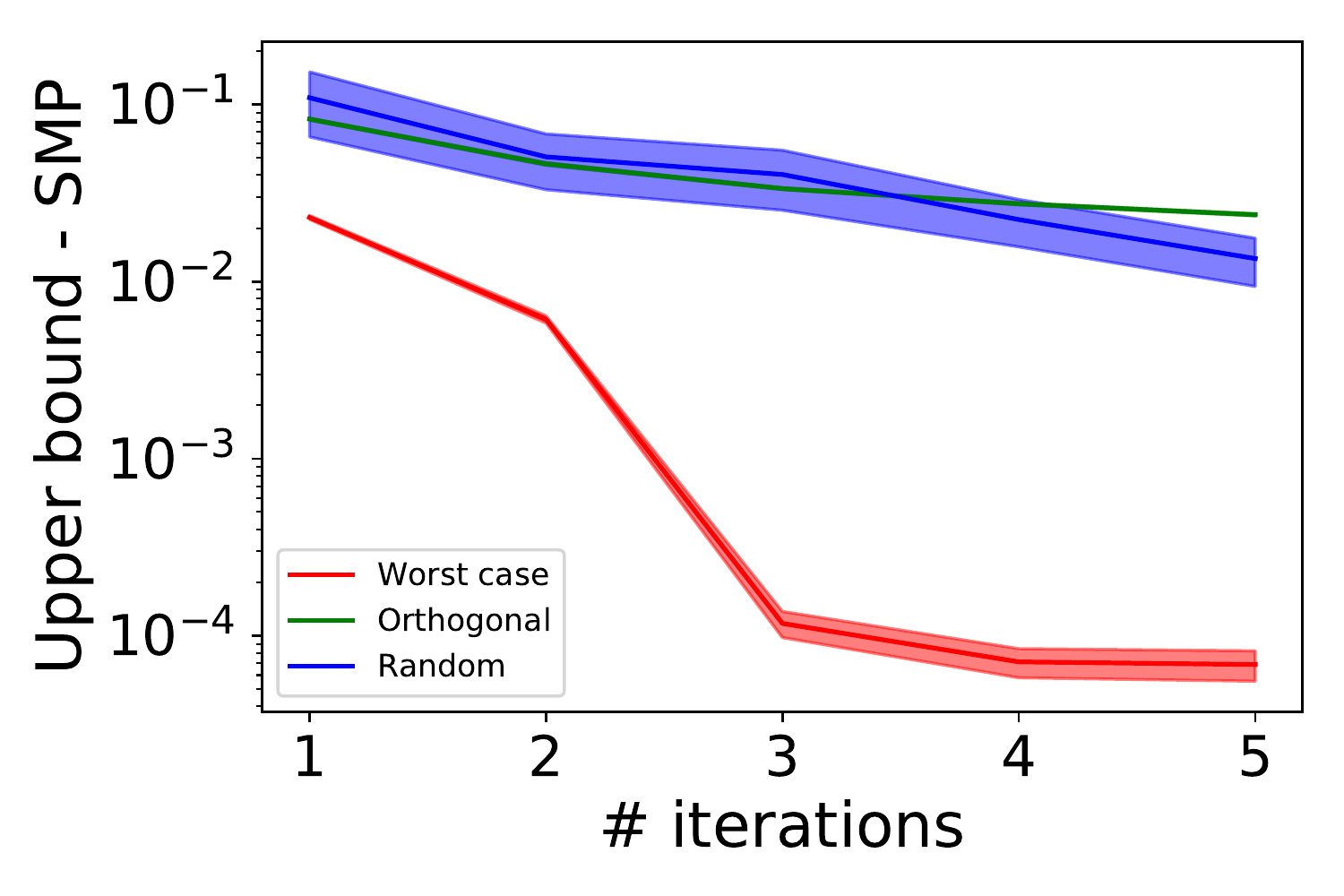}}
\subfigure[SFs]{\label{fig:sf_5}\includegraphics[width=0.27\linewidth]{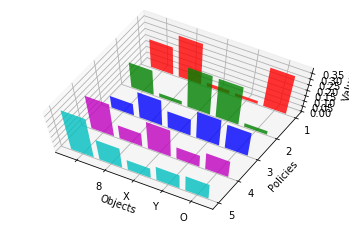}}
\subfigure[Trajectories]{\label{fig:traj}\includegraphics[width=0.18\linewidth]{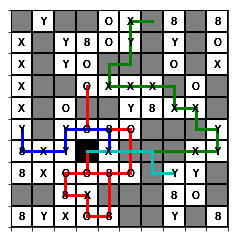}}
\caption{Experimental results in a 2D grid world. \cref{fig:results_d5} presents the performance of the SMP and GPI w.r.t the worst case reward. \cref{fig:baselines} compares \cref{alg:greedy-iterative} with two baselines, where we show the worst case performance, relative to the upper bound, in a logarithmic scale. \cref{fig:sf_5} visualizes the SFs of the policies in the set and \cref{fig:traj} presents trajectories that were taken by different policies.}
\end{figure}

Inspecting \cref{fig:results_d5}, we can see that the GPI policy indeed performs better than the SMP as \cref{lemma:gpi>=setmax} indicates. 
We note that the blue line (in \cref{fig:results_d5}) does not correspond to the worst case performance of the GPI policy.  Instead, we can get a good approximation for it because we have that: $\wapi \cdot \psi(\pi^{\text{SMP}}_{\Pi^n}) \le \bar \w_{\Pi^n}^{\text{GPI}} \cdot \psi(\pi^{\text{GPI}}_{\Pi^n}) \le \wapi \cdot \psi(\pi^{\text{GPI}}_{\Pi^n}) $; i.e., the worst case performance of GPI (in the middle) is guaranteed to be between the green and blue lines in \cref{fig:results_d5}. This also implies that the upper bound in \cref{lemma:imposibility_res} does not apply for the blue line.

We also compare our algorithm to two baselines in \cref{fig:baselines} (for $d=10$):  (i) Orthogonal - at iteration $t$ we train policy $\pi^t$ to maximize the reward $\w=\mat{e}^t$ (a vector of zeroes with a one on the t-th coordinate) such that a matrix with the vectors $\w$ in its columns forms the identity matrix; (ii) Random: at iteration $t$ we train policy $\pi^t$ to maximize reward $\w\sim \tilde N(\bar 0, \bar 1),$ \ie, we sample a vector of dimension $d$ from a Normal Gaussian distribution and normalize it to have a norm of $1$.
While all the methods improve as we add policies to the set, \cref{alg:greedy-iterative} clearly outperforms the baselines. 

In \cref{fig:sf_5} and \cref{fig:traj} we visualize the policies that were discovered by \cref{alg:greedy-iterative}. \cref{fig:sf_5} presents the SFs of the discovered policies, where each row (color) corresponds to a different policy and the columns correspond to the different features. We do not enumerate the features from $1$ to $d$, but instead we label them with markers that correspond to specific items (the x-axis labels). In \cref{fig:traj} we present a trajectory from each policy. We note that both the colors and the markers match between the two figures: the red color corresponds to the same policy in both figures, and the item markers in \cref{fig:traj} correspond to the coordinates in the x-axis of \cref{fig:sf_5}. 

Inspecting the figures we can see that the discovered policies are qualitatively diverse: in \cref{fig:sf_5} we can see that the SFs of the different policies have different weights for different items, and in \cref{fig:traj} we can see that the policies visit different states. For example, we can see that the teal policy has a larger weight for the no item feature (\cref{fig:sf_5}) and visits only no-item states (\cref{fig:traj}) and that the green policy has higher weights for the 'Y' and 'X' items (\cref{fig:sf_5}) and indeed visits them (\cref{fig:traj}). 

\begin{figure}[h]
\vspace{-0.5cm}
\centering
\subfigure[SMP vs. GPI, d=5]{\label{fig:test}
\includegraphics[width=0.3\linewidth]{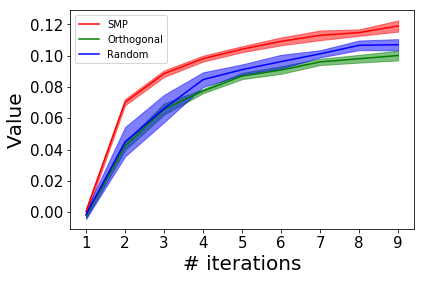}}
\subfigure[SMP vs. GPI, d=5]{\label{fig:test_2}
\includegraphics[width=0.3\linewidth]{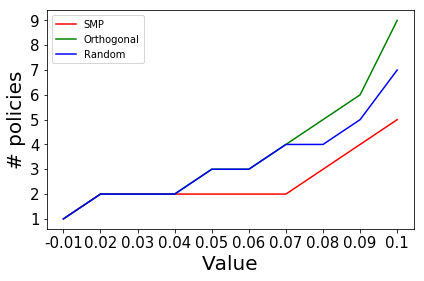}}
\caption{Experimental results with regularized $w$.}
\label{fig:test_}
\end{figure}

Finally, in \cref{fig:test_}, we compare the performance of our algorithm with that of the baseline methods over a test set of rewards. The only difference is in how we evaluate the algorithms. Specifically, we sampled $500$ reward signals from the uniform distribution over the unit ball. Recall that at iteration $t$ each algorithm has a set of policies $\Pi^t,$ so we evaluate the SMP defined on this set, $\smp_{\Pi^t},$ w.r.t each one of the test rewards. Then, for each method, we report the mean value obtained over the test rewards and repeat this procedure for $10$ different seeds. Finally, we report the mean and the confidence interval over the seeds. Note that the performance in this experiment will necessarily be better than the  in \cref{fig:results_d5} because here we evaluate average performance rather than worst-case performance. Also note that our algorithm was not designed to optimize the performance on this "test set", but to optimize the performance w.r.t the worst case. Therefore it is not necessarily expected to outperform the baselines when measured on this metric.

Inspecting Figure \cref{fig:test} we can see that our algorithm (denoted by SMP) performs better than the two baselines. This is a bit surprising for the reasons mentioned above, and suggests that optimising for the worst case also improves the performance w.r.t the entire distribution (transfer learning result). At first glance, the relative gain in performance might seem small. Therefore, the baselines might seem preferable to some users due to their simplicity. However, recall that the computational cost for computing the worst case reward is small compared to finding the policy the maximizes it, and therefore the relative cost of the added complexity is low.

The last observation suggests that we should care about how many policies are needed by each method to achieve the same value. We present these results in \cref{fig:test_2}. Note that we use exactly the same data as in \cref{fig:test} but present it in a different manner. Inspecting the figure, we can see that the baselines require more policies to achieve the same value. For example, to achieve a value of $0.07$, the SMP required $2$ policies, while the baselines needed $4$; and for a value of $0.1$ the SMP required $4$ policies while the baselines needed $7$ and $9$ respectively.

\textbf{DeepMind Control Suite.}
Next, we conducted a set of experiments in the DM Control Suite \citep{tassa2018deepmind}. We focused on the setup where the agent is learning from feature observations corresponding to the positions and velocities of the ``body'' in the task (pixels were only used for visualization). We considered the following six domains: 'Acrobot', 'Cheetah', 'Fish', 'Hopper', 'Pendulum',  and 'Walker'. In each of these tasks we do not use the extrinsic reward that is defined by the task, but instead consider rewards that are linear in the observations (of dimensions 6, 17, 21, 15, 3, and 24 respectively). At each iteration of \cref{alg:greedy-iterative} we train a policy for $2 \cdot 10^6$ steps using an actor-critic (and specifically STACX \citep{zahavy2020self}) to maximize $\bar{\w}^{\text{SMP}}_{\Pi^t}$, add it to the set, and compute a new $\bar{\w}^{\text{SMP}}_{\Pi^{t+1}}$. 

\begin{figure}[h]
\centering
\subfigure[Convergence]{\label{fig:Convergence}
\includegraphics[width=0.49\linewidth]{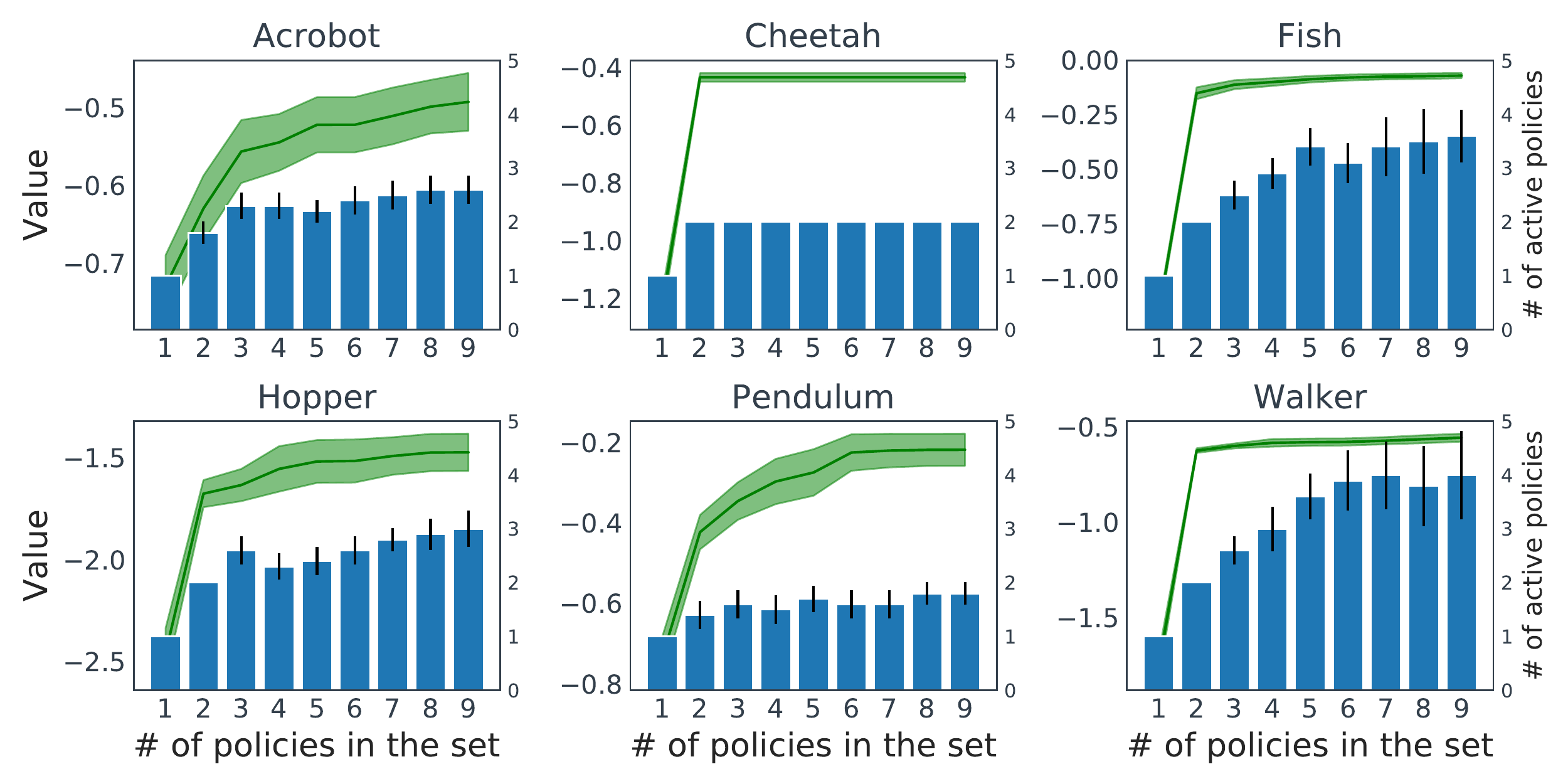}}
\subfigure[Diversity]{\label{fig:Diversity}\includegraphics[width=0.45\linewidth]{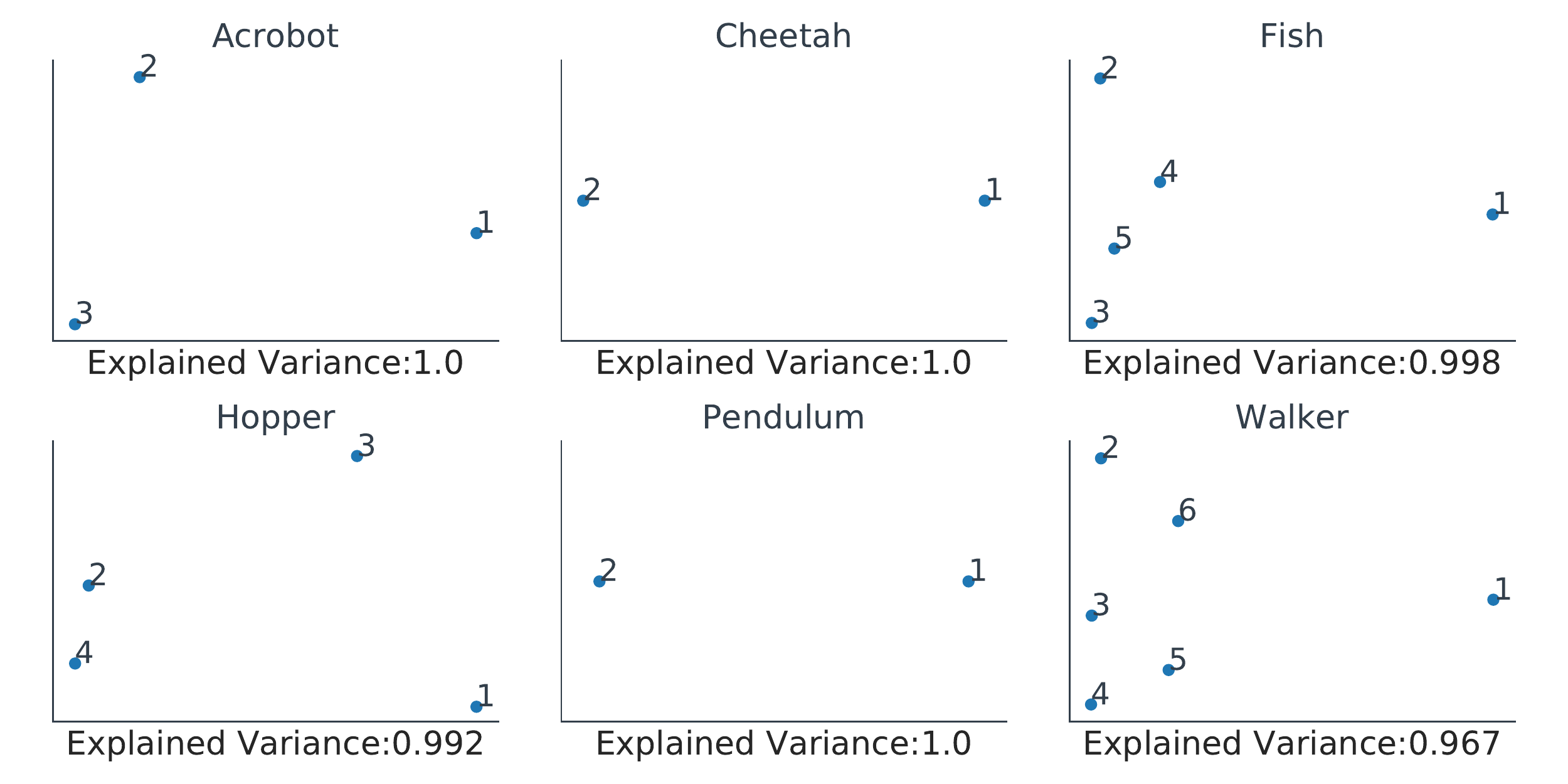}}
\caption{Experimental results in Deepmind Control Suite.}
\end{figure}

\cref{fig:Convergence} presents the performance of SMP in each iteration w.r.t $\bar{\w}^{\text{SMP}}_{\Pi^t}$. As we can see, our algorithm is indeed improving in each iteration.  In addition, we present the average number of active policies (\cref{def:active}) in each iteration with bars. All the results are averaged over $10$ seeds and presented with $95\%$ Gaussian confidence intervals. \cref{fig:Diversity} presents the SFs of the active policies at the end of training (the seed with the maximum number of active policies was selected). We perform PCA dimensionality reduction such that each point in the scatter plot corresponds to the SFs of one of the active policies. We also report the variance explained by PCA: values close to $1$ indicate that the dimensionality reduction has preserved the original variance. Examining the figures we can see that our algorithm is strictly improving (as \cref{thm:alg1} predicts) and that the active policies in the set are indeed diverse; we can also see that adding more policies is correlated with improving performance.

Finally, in \cref{fig:cheetah}, \cref{fig:hopper} and \cref{fig:walker} we visualize the trajectories of the discovered policies in the Cheetah, Hopper and Walker environments. Although the algorithm was oblivious to the extrinsic reward of the tasks, it was still able to discover different locomotion skills, postures, and even some "yoga poses" (as noted by the label we gave each policy on the left). The other bodies (Acrobot, Pendulum and Fish) have simpler bodies and exhibited simpler movement in various directions and velocities, {\sl e.g.} the Pendulum learned to balance itself up and down. The supplementary material contains \textbf{videos} from all the bodies.

\section{Conclusion}

We have presented an algorithm that incrementally builds a set of policies to solve a collection of tasks defined as linear combinations of known features. The policies returned by our algorithm can be composed in multiple ways. We have shown that when the composition is a SMP its worst-case performance on the set of tasks will strictly improve at each iteration of our algorithm. More generally, the performance guarantees we have derived also serve as a lower bound for any composition of policies that qualifies as a SIP. The composition of policies has many applications in RL, such as for example to build hierarchical agents or to tackle a sequence of tasks in a continual learning scenario. Our algorithm provides a simple and principled way to build a diverse set of policies that can be used in these and potentially many other scenarios.

\section{Acknowledgements}
We would like to thank Remi Munos and Will Dabney for their comments and feedback on this paper. 

\begin{figure}[h]
\centering
\vspace{-0.3cm}
\subfigure[Cheetah]{ \label{fig:cheetah}\includegraphics[width=0.9\linewidth]{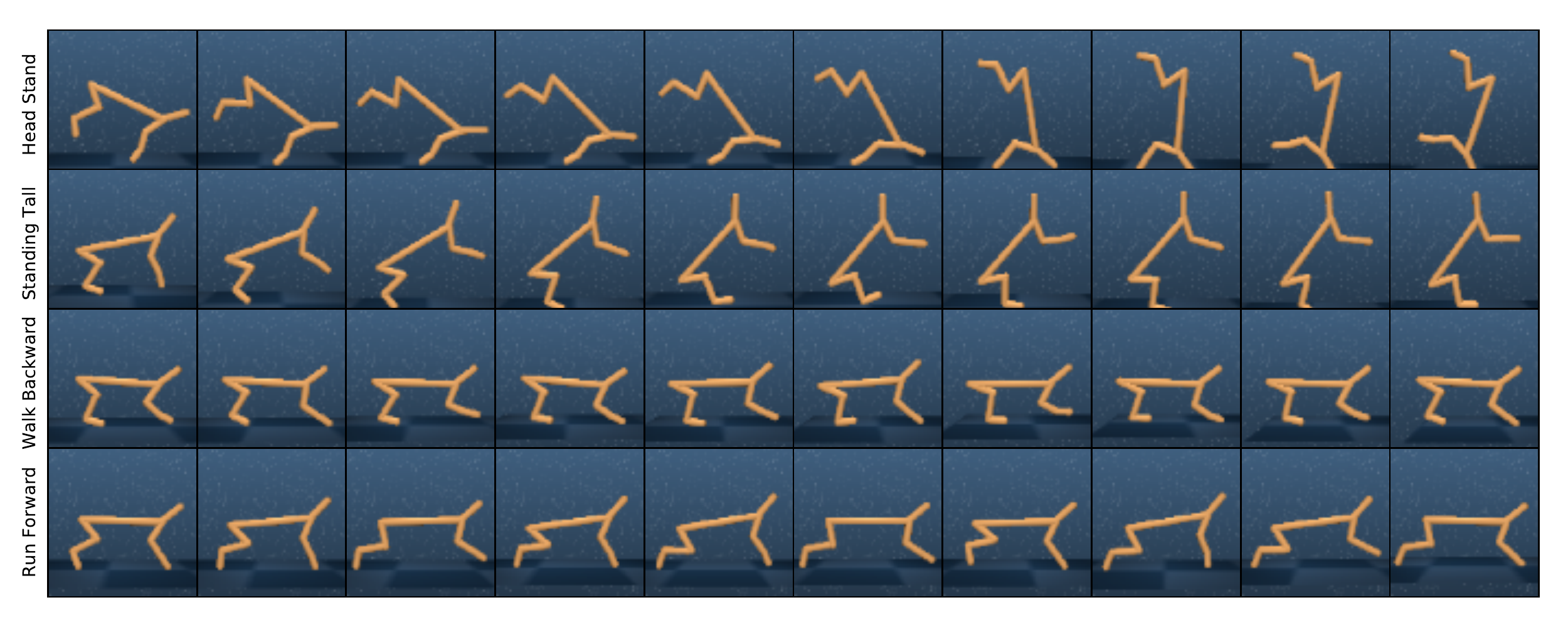}}\vspace{-0.3cm}
\subfigure[Hopper]{\label{fig:hopper}\includegraphics[width=0.9\linewidth]{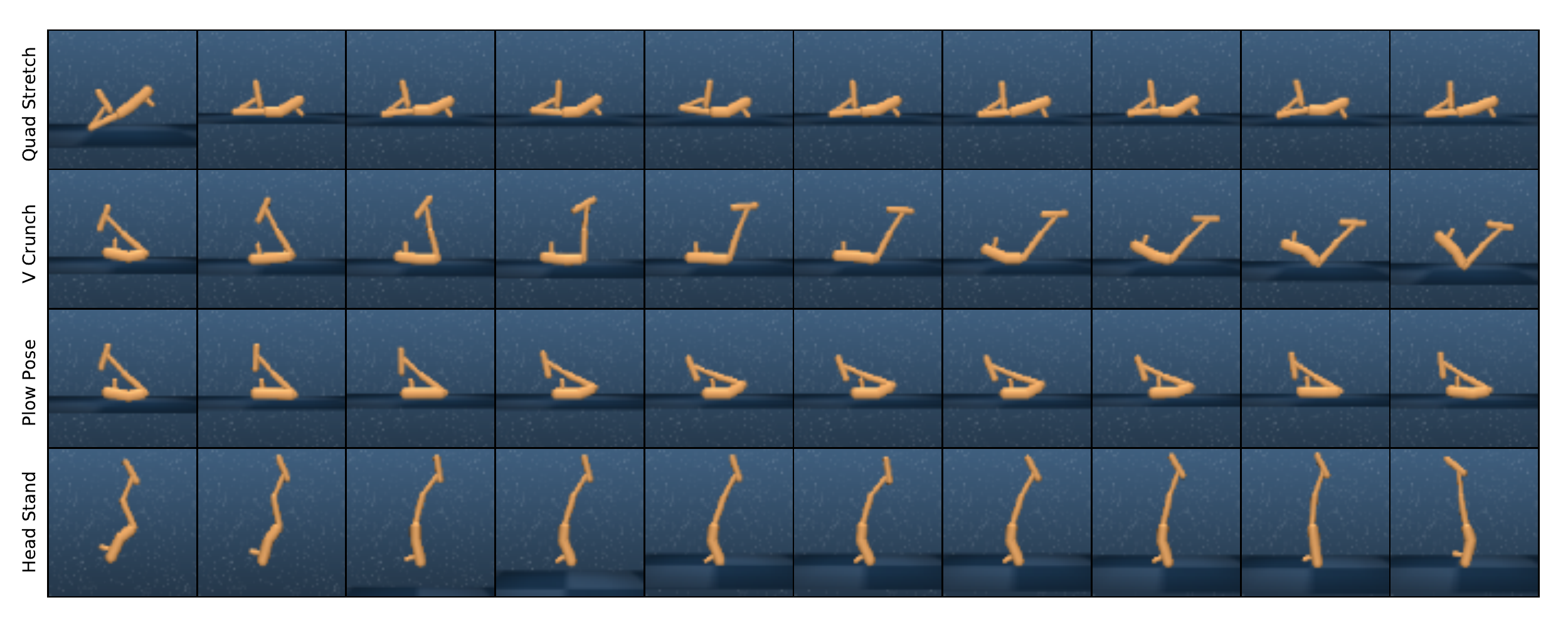}}\vspace{-0.3cm}
\subfigure[Walker]{\label{fig:walker}\includegraphics[width=0.9\linewidth]{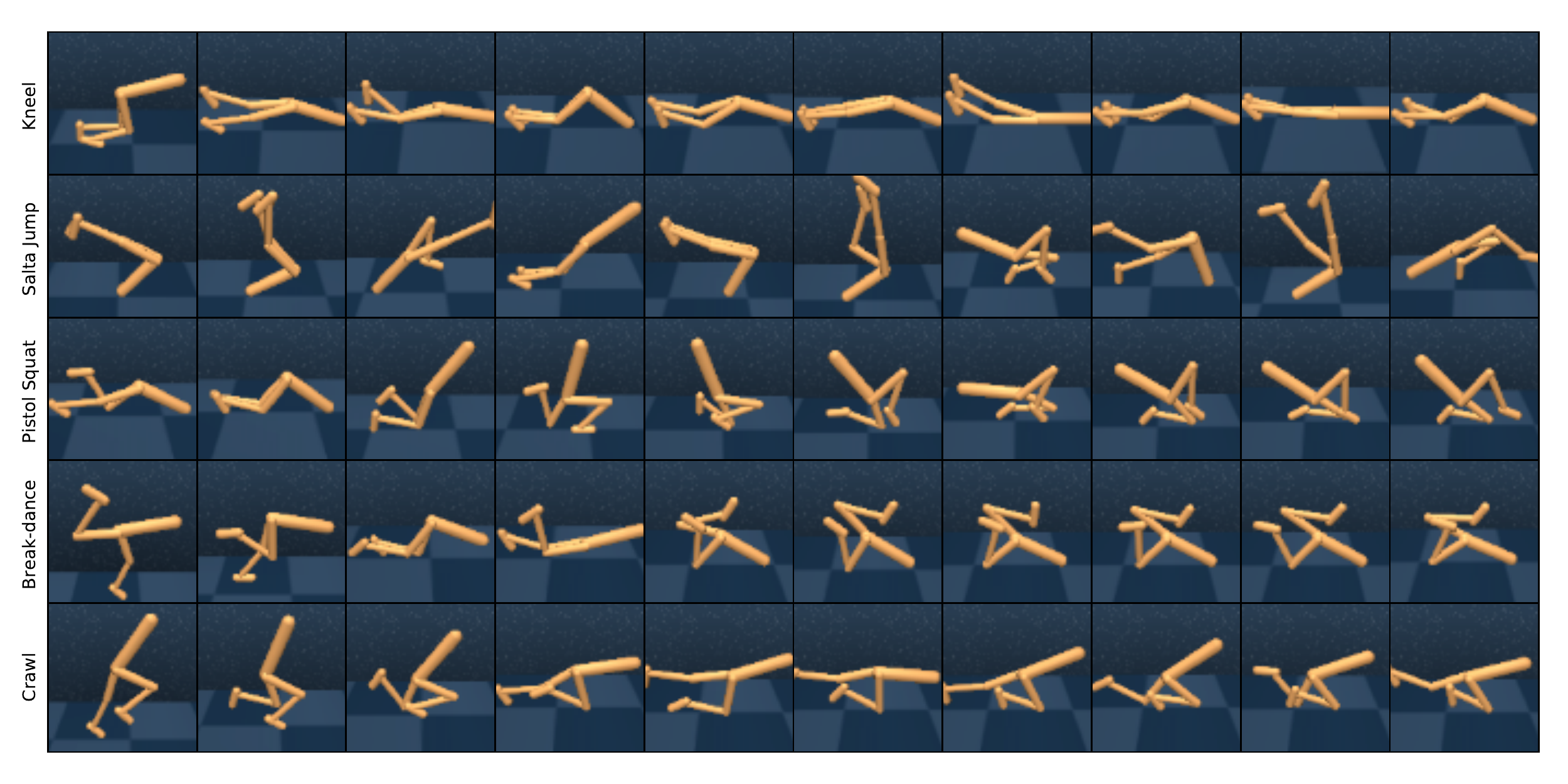}}\vspace{-0.3cm}
\caption{Experimental results in Deepmind Control Suite.}
\label{fig:bodies}
\end{figure}

\clearpage
\bibliographystyle{iclr_main}
\bibliography{references}

\onecolumn 
\appendix

\section{Proofs}

\lemsmpsetmax*

\begin{proof}
We first show that the fact that $\pi$ is a SIP implies that $v^{\pi}_{\Pi^n,\w} \ge \vsmpfull$ for all \w. For any $\w \in \W$, we have 
\begin{align*}
 v^{\pi}_{\Pi^n,\w} 
 & \ge v_{\w}^{i} \text{ for all } \pi^i \in \Pi^n \tag{SIP as in Definition~\ref{def:sip}} \\
 & \ge \max_{i\in [1,\ldots,n]} v_{\w}^{i}  \\
 & = \vsmpfull.
\end{align*}
We now show the converse:
\begin{align*}
 v^{\pi}_{\Pi^n,\w} 
 & \ge \vsmpfull \\
 & = \max_{i\in [1,\ldots,n]} v_{\w}^{i}  \tag{SMP as in Definition~\ref{def:smp}} \\
  & \ge v_{\w}^{i} \text{ for all } \pi^i \in \Pi^n. \\
 \end{align*}
\end{proof}

\lemgpisetmax*
\begin{proof}

We know from previous results in the literature \cite{barreto2017successor} that $\qgpi(\Pi^n)(s,a) \ge Q^{\pi}(s,a)$ for all $(s,a) \in \S \times \A$ and any $\pi \in \Pi^n$.

Thus, we have that $\forall s \in S$:
 \begin{align*}
 \vgpi_{\Pi^n,\w}(s)
& = \qgpi_{\Pi^n,\w}(s,\gpi(s)) \\
& \ge \max_{\pi \in \Pi^n, a \in \A} Q^{\pi}_{\w}(s,a) \\
& \ge \max_{\pi \in \Pi^n} \E_{a \sim \pi}[Q_{\w}^{\pi}(s,a)] \\
& = \max_{\pi \in \Pi^n} v_{\w}^{\pi}(s) \\
& = \vwhpi(s), \\
\end{align*}
where the second inequality is due to Jensen's inequality. 

Therefore:
 \begin{align*}
\vgpi_{\Pi^n,\w}(s)
 &\geq \vwhpi(s) \\
 \E_D[\vgpi_{\Pi^n,\w}(s)] & \geq \E_D[\vwhpi(s)] \\
 \vgpi_{\Pi^n,\w} & \geq \vwhpi
\end{align*}

\end{proof}

\lemstop*
\begin{proof}
For the \textbf{impossibility} result. we have that
\begin{align}
    \max _{\Pi^n \subseteq \Pi} \min_{w\in B_2}  
     \enspace    \vwhpi 
    & =  \min_{w\in B_2}\max _{\pi \in \Pi} \enspace  v^\pi_{w} \label{ineq:2}\\
    & =   \min_{w\in B_2}\max _{\pi \in \Pi} \enspace  \psi(\pi) \cdot w \nonumber \\
    & \le   \min_{w\in B_2}\max _{\psi \in \Delta^{d-1}} \enspace  \psi(\pi)  \cdot w   \label{ineq:3}\\
    & =   \min_{w\in B_2} \max_i  \enspace w_i\\
    &= -\frac{1}{\sqrt{d}}. \label{ineq:4}
\end{align}
The equality in \cref{ineq:2} follows from the fact that $\Pi$ is set of all possible policies and therefore the largest possible subset (the maximizer of the first maximization). In that case the second maximization (by the SMP) is equivalent to selecting the optimal policy in the MDP. Notice that the order of maximization-minimization here is in the reversed when compared to AL, i.e., for each reward the SMP chooses the best policy in the MDP, while in AL the reward is chosen to be the worst possible w.r.t any policy. The inequality in \cref{ineq:3} follows from the fact that we increase the size of the optimization set in the inner loop, and the equality in \cref{ineq:4} follows from the fact that a maximizer in the inner loop puts maximal distribution on the largest component of $w$. 

\textbf{Feasibility.}
To show the feasibility of the upper bound in the previous impossibility result we give an example of an MDP in which a set of $d$ policies achieves the upper bound. The $d$ policies are chosen such that their stationary distributions form an orthogonal basis. 
\begin{align}
 \min_{w\in B_2} \vwhpi 
=  \min_{w\in B_2} \max_{\psi \in \left\{ \psi^1, \ldots, \psi^d\right\}}  w \cdot \psi 
=  \min_{w\in B_2}\max _{\psi \in \Delta^{d-1}}  w \cdot \psi = - \frac{1}{\sqrt{d}}, \label{eq:simplex}
\end{align}
which follows from the fact that the maximization over the simplex is equivalent to a maximization over pure strategies.  
\end{proof}

\begin{restatable}[Reformulation of the worst-case reward for an SMP]{lem}{thmw}
\label{thm:wnew}
 Let $\{\psi^i\}_{i=1}^n$ be $n$ successor feature vectors. Let $\w^*$ be the adversarial reward, w.r.t the SMP defined given these successor features. That is, $\w^*$ is the solution for
 \begin{align}
\label{eq:w1}
     \arg \min_w  \enspace \enspace\enspace \enspace &  \max_{i\in[1,\ldots,n]} \{ w\cdot \psi^1,\ldots, w\cdot \psi^n\} \nonumber \\
     s.t.\enspace \enspace \enspace \enspace & ||w||^2-1 \le 0
\end{align}

Let $w^*_i$ be the solution to the following problem for $i \in [1, \ldots, n]$:
 \begin{align}
\label{eq:w2}
     \arg \min_w  \enspace \enspace\enspace \enspace & w\cdot \psi^i \nonumber \\
     s.t.\enspace \enspace \enspace \enspace & ||w||^2-1 \le 0  \nonumber \\
     & w\cdot (\psi^j- \psi^i) \le 0
\end{align}
Then, $\w^* = \arg\min_i w^*_i.$ 
\end{restatable}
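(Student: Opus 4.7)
The plan is to decompose the feasible set $\mathbb{B}_2$ of the original problem \eqref{eq:w1} according to which of the $\psi^i$ attains the inner maximum, solve the original problem on each piece separately, and then take the best piece. The proof is essentially a case analysis that turns the min-max \eqref{eq:w1} into a collection of $n$ convex subproblems with a linear objective, each of which is exactly \eqref{eq:w2}.

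First I would define, for each $i \in [1,\ldots,n]$, the region
\[
C_i \triangleq \{ w \in \mathbb{R}^d : \|w\|^2 \le 1,\ w \cdot (\psi^j - \psi^i) \le 0\ \text{for all } j \in [1,\ldots,n]\}.
\]
By construction $w \in C_i$ iff $w$ lies in the unit ball and $w \cdot \psi^i = \max_{j} w \cdot \psi^j$, so on $C_i$ the inner objective of \eqref{eq:w1} simplifies to the linear function $w \cdot \psi^i$, which is exactly the objective of \eqref{eq:w2}. Hence $w^*_i$ is by definition a minimizer of $\max_{j} w \cdot \psi^j$ subject to $w \in C_i$, with optimal value $w^*_i \cdot \psi^i$. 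Note that each $C_i$ is nonempty (it contains $0$), so \eqref{eq:w2} is always feasible and, being a strictly convex QP on a compact set, admits a minimizer.

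Next I would verify the covering identity $\bigcup_{i=1}^n C_i = \mathbb{B}_2$: for any $w$ with $\|w\| \le 1$, pick $i^\star \in \arg\max_{j} w \cdot \psi^j$; then $w \in C_{i^\star}$. This lets me rewrite the original problem as
\[
\min_{w \in \mathbb{B}_2} \max_{j} w \cdot \psi^j \;=\; \min_{i \in [1,\ldots,n]} \; \min_{w \in C_i} w \cdot \psi^i \;=\; \min_{i \in [1,\ldots,n]} w^*_i \cdot \psi^i,
\]
where the first equality holds because every $w \in \mathbb{B}_2$ belongs to at least one $C_i$ on which the inner max equals $w \cdot \psi^i$, and conversely any pair $(i,w)$ with $w \in C_i$ is feasible for \eqref{eq:w1} with objective $w \cdot \psi^i \ge \max_j w \cdot \psi^j$ (in fact with equality).

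Finally, the displayed identity shows that the optimum of \eqref{eq:w1} is attained at the $w^*_{i^\star}$ whose objective value $w^*_{i^\star} \cdot \psi^{i^\star}$ is smallest, giving the claimed $\w^* = \arg\min_i w^*_i$ (with the $\arg\min$ interpreted as selecting the vector with smallest inner-max value). There is no serious obstacle in this argument; the only thing to be careful about is confirming that the objectives match on each $C_i$ and that the union covers $\mathbb{B}_2$, which both follow immediately from the definition of the constraints $w \cdot (\psi^j - \psi^i) \le 0$. If one wanted a cleaner statement, it would be natural to rephrase the conclusion as ``the optimal value of \eqref{eq:w1} equals $\min_i w^*_i \cdot \psi^i$, and a corresponding minimizer is the associated $w^*_i$.''
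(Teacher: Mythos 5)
Your proof is correct and takes essentially the same route as the paper's: the paper's own argument is just a terser statement of the same observation, namely that the constraints of \eqref{eq:w2} exactly carve out the region of the ball on which $\psi^i$ attains the inner maximum, so the min--max \eqref{eq:w1} decomposes into the $n$ subproblems and the smallest subproblem value is the answer; your version merely makes the covering of $\mathbb{B}_2$ and the two-sided comparison of optimal values explicit. One minor slip: the objective of \eqref{eq:w2} is linear, not a strictly convex QP, but existence of a minimizer still follows since each $C_i$ is a nonempty compact convex set and the objective is continuous.
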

\begin{proof}
For any solution $w^*$ to \cref{eq:opt_w} there is some policy $i$ in the set that is one of its maximizers. Since it is the maximizer w.r.t $w^*,$ its value w.r.t $w^*$ is bigger or equal to that of any other policy in the set. Since we are checking the solution among all $i \in [1,\ldots,n],$ one of them must be the solution.
\end{proof}

\thmactive*
\begin{proof}
Let $\Pi^n = \left\{\pi_i\right\}_{i=1}^n.$ Denote by $J$ a subset of the indices $[1,\ldots,n]$ that corresponds to the indices of the active policies such that $\Pi_a (\Pi^n) =  \left\{\pi_j\right\}_{j \in J}.$
We can rewrite problem \cref{eq:w1} as follows:
\begin{equation}
\begin{array}{ll}
    \mbox{minimize} & \gamma \\
    \mbox{s.t.} & \gamma \geq w \cdot \psi^i \quad i=1,\ldots,n \\ 
     & \|w\|^2 \le 1.
\end{array}
\end{equation}
Let $(\gamma^\star, w^\star)$ be any optimal points.
The set of inactive policies $i \not \in \mathcal{J}$ satisfy $\gamma^\star > w^\star \cdot \psi^i$. Since these constraints are not binding we can drop them from the formulation and maintain the same optimal objective value, \ie, 
\begin{equation}
\label{eq:active}
\begin{array}{ll}
    \mbox{minimize} & \gamma \\
    \mbox{s.t.} & \gamma \geq w \cdot \psi^j \quad j \in \mathcal{J} \\ 
     & \|w\|^2 \le 1,
\end{array}
\end{equation}
has the same optimal objective value, $\vsmpfulla$, as the full problem. This in turn can
be rewritten
\begin{equation}
\begin{array}{ll}
    \mbox{minimize} & \max_{j \in \mathcal{J}} w \cdot \psi^j \\
    \mbox{s.t.} & \|w\|^2 \le 1,
\end{array}
\end{equation}
with optimal value $\bar v^{\text{SMP}}_{\Pi_a(\Pi^n)}$, which is therefore equal to
$\vsmpfulla$.
 
\end{proof}

\begin{restatable}[$\kappa$ is binding]{lem}{lem_kappa}
\label{lem:kappa}
\end{restatable}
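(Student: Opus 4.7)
Since the lemma's statement is not displayed in the excerpt, I will interpret the title ``$\kappa$ is binding'' as the natural claim that $\kappa$, the Lagrange multiplier associated with the unit-ball constraint $\|\w\|^2\le 1$ in problem~\eqref{eq:opt_w}, is strictly positive at the optimum; equivalently, the norm constraint is active, so $\|\wapi\|=1$. The plan is to cast \eqref{eq:opt_w} in standard convex form, write out its KKT system, and then rule out $\kappa=0$ via a short scaling argument.

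First, I would rewrite \eqref{eq:opt_w} in epigraph form as $\min_{\w,\gamma}\,\gamma$ subject to $\w\cdot\vpsi^i\le\gamma$ for every $i$ and $\|\w\|^2\le 1$. Introducing multipliers $\lambda_i\ge 0$ for the linear constraints and $\kappa\ge 0$ for the norm constraint, stationarity in $\gamma$ gives $\sum_i\lambda_i=1$ (so $\lambda$ is a probability vector over the $n$ policies), stationarity in $\w$ gives $\sum_i\lambda_i\vpsi^i+2\kappa\w=0$, and complementary slackness gives $\kappa(\|\w\|^2-1)=0$. The problem is convex and $\w=\mat{0}$ is strictly feasible, so Slater's condition holds and the KKT conditions are both necessary and sufficient.

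Second, I would establish $\kappa>0$ through a scaling argument. The key preliminary fact is that the optimal value satisfies $\vsmpfulla<0$. This follows from the mild non-degeneracy condition $\mat{0}\notin\mathrm{conv}\{\vpsi^i\}$, which in turn is guaranteed by features $\vphi\in[0,1]^d$ together with the assumption that each policy activates some feature with positive probability: the direction $\w_0=-\bar\vpsi/\|\bar\vpsi\|$ with $\bar\vpsi=\tfrac{1}{n}\sum_i\vpsi^i$ is then in $\mathbb{B}_2$ and makes every $\w_0\cdot\vpsi^i<0$. Now suppose for contradiction that $\|\wapi\|<1$. Then $\w'=\wapi/\|\wapi\|\in\mathbb{B}_2$ is feasible and the objective evaluates to $\max_i\w'\cdot\vpsi^i=\vsmpfulla/\|\wapi\|<\vsmpfulla$, contradicting optimality of $\wapi$. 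Hence $\|\wapi\|=1$, and by complementary slackness $\kappa>0$. As a sanity check one can note the dual route: if $\kappa=0$, the $\w$-stationarity condition forces $\sum_i\lambda_i\vpsi^i=\mat{0}$ with $\lambda\in\Delta^{n-1}$, i.e.\ $\mat{0}\in\mathrm{conv}\{\vpsi^i\}$, again contradicting the non-degeneracy assumption.

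The main obstacle is not the convex-analytic manipulation, which is routine once the problem is in epigraph form, but rather pinning down the non-degeneracy hypothesis needed to ensure $\vsmpfulla<0$ and tracing it back cleanly to the paper's standing assumptions on $\vphi$ and on the class of policies that can be returned by the inner RL solver. Once that hypothesis is stated, the rest of the proof is the two-line scaling argument above.
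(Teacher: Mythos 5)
Your proof is correct and follows essentially the same route as the paper's: exhibit a strictly negative feasible value (the paper uses $(-1,\ldots,-1)/\sqrt{d}$, you use $-\bar{\vpsi}/\|\bar{\vpsi}\|$, both relying on the SFs being nonnegative and nonzero), then rule out an interior optimum by rescaling $\w$ to the unit sphere. Your additional KKT remark that $\kappa=0$ would force $\mat{0}\in\mathrm{conv}\{\vpsi^i\}$ is a welcome tightening, since it makes explicit the step from $\|\wapi\|=1$ to $\kappa>0$ that the paper's proof leaves implicit.
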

\begin{proof}

Denote by $\dot w$ a possible solution where the constraint $\|\dot w\|^2 \leq 1$ is not binding, i.e., ($\|\dot w\|^2 < 1, \dot \kappa = 0$). In addition, denote the primal objective for $\dot w$ by $\dot v = \max_{i\in[1,\ldots,n]} \{ \dot w\cdot \psi^i\}.$ To prove the lemma, we are going to inspect two cases: (i) $\dot v \ge 0$ and (ii) $\dot v <0.$ For each of these two cases we will show that there exists another feasible solution $\tilde w$ that achieves a lower value $\tilde w$ for the primal objective ($\tilde w < \dot v$), and therefore $\dot w$ is not the minimizer.

For the first case $\dot v \ge 0$, consider the vector $$\tilde w = (-1,-1,\ldots,-1) / \sqrt{d}.$$ $\tilde w$ is a feasible solution to the problem, since $\|\tilde w\|^2=1.$ Since all the SFs have positive coordinates, we have that if they are not all exactly $0,$ then the primal objective evaluated at $\tilde w$ is stictly negative: $\max_{i\in[1,\ldots,n]} \{ \tilde w\cdot \psi^1,\ldots, \tilde w\cdot \psi^n\} < 0.$ 

We now consider the second case of $\dot v < 0.$ Notice that multiplying $\dot w$ by a positive constant $c$ would not change the maximizer, i.e.,  $\arg\max_{i\in[1,\ldots,n]} \{  c \dot w\cdot \psi^i\} = \arg\max_{i\in[1,\ldots,n]} \{  \dot w\cdot \psi^i\}.$ Since $\dot v < 0,$ it means that $\dot w / \|\dot w\|$  ($c=1 /\|\dot w\|$) is a feasible solution and a better minimizer than $\dot w.$ Therefore $\dot w$ is not the minimizer. 

We conclude that the constraint $\kappa$ is always binding, i.e. $\|w\|^2=1, \kappa>0.$
\end{proof}

\thmalg*

\begin{proof}
We have that 
\begin{align}
    \label{eq:improvement}
    \vhpit = \min_{\w \in \mathbb{B}_2} \max_{\vpsi \in \Psi^t } \vpsi \cdot \w \le \max_{\vpsi \in \Psi^t } \vpsi \cdot \wapitt  \le \max_{\vpsi \in \Psi^{t+1} } \vpsi \cdot \wapitt = \vhpitt .
\end{align}
The first inequality is true because we replace the minimization over $\w$ with $\wapitt,$ and the second inequality is true because we add a new policy to the set. Thus, we will focus on showing that the first inequality is strict. We do it in two steps. In the first step, we will show that the problem $\min_{\w \in \mathbb{B}_2} \max_{\vpsi \in \Psi^t } \vpsi \cdot \w$ has a unique solution $\w_t^\star$. Thus, for the first inequality to hold with equality it must be that $\wapitt=\wapit.$ However, we know that, since the algorithm did not stop, $\vpsi^{t+1} \cdot \wapit > \vhpit,$ hence a contradiction. 

We will now show that $\min_{\w \in \mathbb{B}_2} \max_{\vpsi \in \Psi^t } \vpsi \cdot \w$ has a unique solution. Before we begin, we refer the reader to \cref{thm:wnew} and \cref{thm:active} where we reformulate the problem to a form that is simpler to analyze. We begin by looking at the partial Lagrangian of \cref{eq:active}:
$$
L(w, \gamma, \kappa, \lambda) = \gamma + \sum_{j \in J} \lambda_j(\psi^j\cdot w - \gamma) + \kappa (\|w\|^2-1). 
$$
The variable $\kappa \geq 0$ is associated with the constraint $\|w\|^2 \leq 1$. Denote by $(\lambda^\star, \kappa^\star)$ any optimal dual variables and note that by complementary slackness we know that either $\kappa^\star > 0$ and $\|w\|_2 = 1$ or $\kappa^\star = 0$ and $\|w\|_2 < 1$. \cref{lem:kappa} above, guarantees that the constraint is in fact binding -- only solutions with $\kappa^\star  >0$ and $\|w\|_2 = 1$ are possible solutions. Notice that this is correct due to the fact that the SFs have positive coordinates and not all of them are $0$ (as in our problem formulation). 

Consequently we focus on the case where $\kappa^\star > 0$ under which the Lagrangian is strongly convex in $w$, and therefore the problem
$$
\min_{w, \gamma} L(w, \gamma, \lambda^\star, \kappa^\star)
$$
has a unique solution.
Every optimizer of the original problem
must also minimize the Lagrangian evaluated at an optimal dual value, and since this minimizer is unique, it implies that the minimizer of the original
problem is unique \cite[Sect.~5.5.5]{boyd2004convex}.

For the second part of the proof, notice that if the new policy $\pi^{t+1}$ does not achieve better reward w.r.t $\vhpit$ than the policies in $\Pi^t$ then we have that:
$$\vhpitt = \min_{w \in \mathbb{B}_2} \max_{\pi \in \Pi^{t+1} }\psi(\pi) \cdot w \le  \max_{\pi \in \Pi^{t+1} } \psi(\pi) \cdot \wapit = \max_{\pi \in \Pi^{t} } \psi(\pi) \cdot \wapit = \vhpit;$$ thus, it is necessary that the policy $\pi^{t+1}$ will achieve better reward w.r.t $\vhpit$ to guarantee strict improvement.

\end{proof}
\section{AL}
\label{sec:al}

In AL there is no reward signal, and the goal is to observe and mimic an expert. The literature on AL is quite vast and dates back to the work of \citep{abbeel2004apprenticeship}, who proposed a novel framework for AL. In this setting, an expert demonstrates a set of trajectories that are used to estimate the SFs of its policy $\pi_E$, denoted by $\psi^E$. The goal is to find a policy $\pi$, whose SFs are close to this estimate, and hence will have a similar return with respect to any weight vector $\w,$ given by
\begin{align}
\label{eq:AL}
   & \arg\max _{\pi} \min_{\w\in B_2} \w\cdot \left(\psi^\pi - \psi^E\right)  
    = \arg\max _{\pi}  -|| \psi^\pi - \psi^E|| \nonumber \\
   & = \arg\min _{\pi}  || \psi^\pi - \psi^E||.
\end{align}

The projection algorithm \citep{abbeel2004apprenticeship} solves this problem in the following manner.
The algorithm starts with an arbitrary policy $\pi_0$ and computes its feature expectation $\psi^0$. At step $t,$ the reward function is defined using weight vector $w_t = \psi^E-\bar\psi^{t-1}$ and the algorithm finds a policy $\pi_t$ that maximizes it, where $\Bar \psi^t$ is a convex combination of SFs of previous (deterministic) policies $\bar{\psi}_t = \sum^t _{j=1}\alpha_j \psi^j.$ In order to get that $\norm{\bar{\psi}_T-\psi^E}\le \epsilon$, the authors show that it suffices to run the algorithm for $T=O(\frac{k}{(1-\gamma)^2\epsilon^2}\log(\frac{k}{(1-\gamma)\epsilon}))$ iterations.

Recently, it was shown that this algorithm can be viewed as a Frank-Wolfe method, also known as the Conditional Gradient (CG) algorithm \citep{zahavy2019apprenticeship}. The idea is that solving \cref{eq:AL} can be seen as a constrained convex optimization problem, where the optimization variable is the SFs, the objective is convex, and the SFs are constrained to be in the SFs polytope $\mathcal{K},$ given as the following convex set:
\begin{definition} [The SFs polytope]
 \label{def:polytope}
 $\mathcal{K} = \left\{x: \sum _{i=1}^{k+1} a_i \psi^i, a_i\ge0, \sum_{i=1}^{k+1} a_i=1, \pi^i \in \Pi \right\}.$
\end{definition}

In general, convex optimization problems can be solved via the more familiar projected gradient descent algorithm. This algorithm takes a step in the reverse gradient direction 
$z_{t+1} = x_t + \alpha_t \nabla_h (x_t)$, and then projects $z_{t+1}$ back into $\mathcal{K}$ to obtain 
$x_{t+1}$. However, in some cases, computing this projection may be computationally hard. In our case, projecting into $\mathcal{K}$ is challenging since it has $|A|^{|S|}$ vertices (feature expectations of deterministic policies). Thus, computing the projection explicitly and then finding $\pi$ whose feature expectations are close to this projection, is computationally prohibitive. 

The CG algorithm \citep{frank1956algorithm} (\cref{alg:fw})  avoids this projection by finding a point $y_t \in \mathcal{K}$ that has the largest correlation with the negative gradient. In AL, this step is equivalent to finding a policy whose SFs has the maximal inner product with the current gradient, i.e., solve an MDP whose reward vector $\w$ is the negative gradient. This is a standard RL (planning) problem and can be solved efficiently, for example, with policy iteration. We also know that there exists at least one optimal deterministic policy for it and that PI will return a solution that is a deterministic policy \citep{puterman2014markov}.

\begin{algorithm}
\caption{The CG method \cite{frank1956algorithm}  }
\label{alg:fw}
\begin{algorithmic}[1]
\STATE Input: a convex set $\mathcal{K}$, a convex function $h$, learning rate schedule $\alpha_t$.
\STATE Initiation: let $x_0 \in \mathcal{K}$
\FOR{$t = 1,\ldots,T$}
    \STATE $y_t = \arg\max_{y\in\mathcal{K}} - \nabla_h(x_{t-1}) \cdot y $ 
    \STATE $x_{t} = (1-\alpha_t) x_{t-1} + \alpha_t y_t$
\ENDFOR
\end{algorithmic}
\end{algorithm}

For smooth functions, CG requires $O(1/t^2)$ iterations to find an $\epsilon-$optimal solution to \cref{eq:AL}. This gives a logarithmic improvement on the result of \citep{abbeel2004apprenticeship}. In addition, it was shown in \citep{zahavy2019apprenticeship} that since the optimization objective is strongly convex, and the constraint set is a polytope, it is possible to use a variant of the CG algorithm, known as Away steps conditional gradient (ASCG) \citep{wolfe_ascg}. ASCG attains a linear rate of convergence when the set is a polytope \citep{guelat1986some, garber2013linearly, jaggi2013revisiting}, i.e., it converges after $O(\log(1/\epsilon)$ iterations. See \citep{zahavy2019apprenticeship} for the exact constants and analysis.

There are some interesting relations between our problem and AL with "no expert", that is, solving 
\begin{equation}
\label{eq:al_noexpert}
    \arg\min _{\pi} || \psi^\pi ||
\end{equation}
In terms of optimization, this problem is equivalent to \cref{eq:AL}, and the same algorithms can be used to solve them. 

Both AL with "no expert" and our algorithm can be used to solve the same goal: achieve good performance w.r.t the worst case reward. However, AL is concerned with finding a single policy, while our algorithm is explicitly designed to find a set of policies. 
There is no direct connection between the policies that are discovered from following these two processes. This is because the intrinsic rewards that are maximised by each algorithm are essentially different. Another way to think about this is that since the policy that is returned by AL is a mixed policy, its goal is to return a set of policies that are similar to the expert, but not diverse from one another. From a geometric perspective, the policies returned by AL are the nodes of the face in the polytope that is closest to the demonstrated SFs. Even more concretely, if the SFs of the expert are given exactly (instead of being approximated from trajectories), then the AL algorithm would return a single vertex (policy). Finally, while a mixed policy can be viewed as a composition of policies, it is not a SIP. Therefore, it does not encourage diversity in the set. 

\section{Regularizing w}
In this section we experimented with constraining the set of rewards to include only vectors $w$ whose mean is zero. Since we are using CVXPY \citep{diamond2016cvxpy} to optimize for $w$ (\cref{eq:opt_w}), this requires adding a simple constraint $\sum\nolimits_{i=1}^d w_i = 0$ to the minimization problem. Note that constraining the mean to be zero does not change the overall problem qualitatively, but it does potentially increase the difference in the relative magnitude of the elements in $w$. Since it makes the resulting $w$'s have more zero elements, i.e., it makes the $w$'s more sparse, it can also be viewed as a method to regularize the worst case reward. Adding this constraint increased the number of $w$'s (and corresponding policies) that made a difference to the optimal value (\cref{def:worstw}). To see this, note that the green curve in \cref{fig:conv_d5_w0} converges to the optimal value in $2$ iterations while the the green curve in \cref{fig:results_d5}) does so in $3$ iterations. As a result, the policies that were discovered by the algorithm are more diverse. To see this observe that the SFs in \cref{fig:sfs_d5_w0} are more focused on specific items than the SFs in \cref{fig:sf_5}. In \cref{fig:conv_d10_w0} and \cref{fig:sf_d10_w0} we verified that this increased diversity continues to be the case when we increase the feature dimension $d$.

\begin{figure}[h]
\centering
\subfigure[SMP vs. GPI, d=5]{\label{fig:conv_d5_w0}
\includegraphics[width=0.23\linewidth]{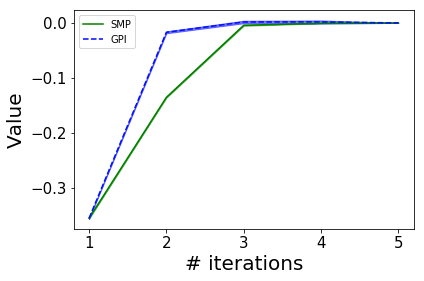}}
\subfigure[SFs, d=5]{\label{fig:sfs_d5_w0}\includegraphics[width=0.25\linewidth]{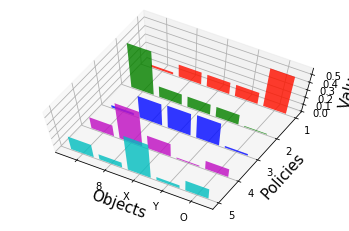}}
\subfigure[SMP vs. GPI, d=9]{\label{fig:conv_d10_w0}\includegraphics[width=0.23\linewidth]{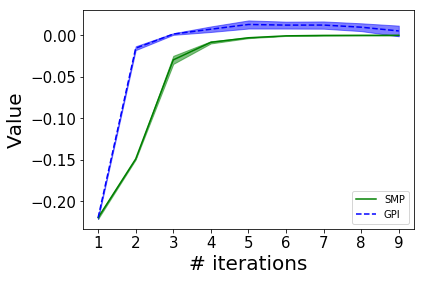}}
\subfigure[SFs, d=9]{\label{fig:sf_d10_w0}\includegraphics[width=0.25\linewidth]{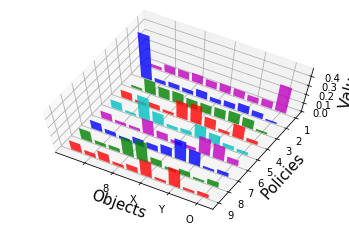}}
\caption{Experimental results with regularized $w$.}
\end{figure}

\end{document}